\documentclass[11pt]{article}

\usepackage[preprint,nonatbib]{neurips_2026}

\usepackage[numbers]{natbib}
\usepackage[utf8]{inputenc} 
\usepackage[T1]{fontenc}    
\usepackage{hyperref}       
\usepackage{url}            
\usepackage{booktabs}       
\usepackage{amsfonts}       
\usepackage{nicefrac}       
\usepackage{microtype}      
\usepackage[table]{xcolor}         

\usepackage{times}
\usepackage{latexsym}
\usepackage[T1]{fontenc}
\usepackage[utf8]{inputenc}
\usepackage{microtype}
\usepackage{inconsolata}
\usepackage{graphicx}
\usepackage{amsfonts}
\usepackage{amsmath}
\usepackage{amssymb}
\usepackage{amsthm}
\usepackage{mathtools}
\usepackage{booktabs}
\usepackage{array}
\usepackage{makecell}

\usepackage{multirow}

\newcommand{\R}{\mathbb{R}}
\newcommand{\E}{\mathbb{E}}
\newcommand{\Simp}{\mathbb{S}}

\DeclareMathOperator*{\argmax}{arg\,max}
\DeclareMathOperator*{\argmin}{arg\,min}
\newtheorem{definition}{Definition}
\newtheorem{proposition}{Proposition}
\newcommand{\Xspace}{\mathcal{X}}
\newcommand{\Dspace}{\mathcal{D}}
\newcommand{\Yspace}{\mathcal{Y}}

\newcommand{\ind}[1]{I\left(#1\right)}
\newcommand{\qvec}{\mathbf{q}}
\newcommand{\pvec}{\mathbf{p}}

\usepackage{xcolor}

\newcommand{\cola}[1]{\textcolor{red}{#1}}
\newcommand{\colb}[1]{\textcolor{green}{#1}}
\newcommand{\colc}[1]{\textcolor{blue}{#1}}
\newcommand{\cold}[1]{\textcolor{orange}{#1}}
\newcommand{\cole}[1]{\textcolor{brown}{#1}}

\def\ECUAS#1{$\text{ECUAS}_#1$}

\title{\ECUAS{n}: A family of metrics for principled evaluation of uncertainty-augmented systems}

\def\eqref#1{Equation (\ref{#1})}

\author{
  \textbf{Lautaro Estienne\textsuperscript{1,2,3,4,*}},
  \textbf{Erik Ernst\textsuperscript{2,*}},
  \textbf{Matías Vera\textsuperscript{1,5}},
  \textbf{Pablo Piantanida\textsuperscript{4,6,7}},
  \textbf{Luciana Ferrer\textsuperscript{2}}
\\
\\
  \textsuperscript{1}School of Engineering, UBA, Argentina \\
  \textsuperscript{2}ICC, CONICET-Universidad de Buenos Aires, Argentina \\
  \textsuperscript{3}LISN, CNRS, Université Paris-Saclay, France \\
  \textsuperscript{4}International Laboratory on Learning Systems, Canada \\
  \textsuperscript{5}CSC, CONICET, Argentina \\
  \textsuperscript{6}Mila - Quebec AI Institute, Canada \\
  \textsuperscript{7}CNRS, Université Paris-Saclay, France \\
  \textsuperscript{*}Equal contribution
\\
\\
  \small{
   \textbf{Correspondence:} \href{mailto:lestienne@fi.uba.ar}{lestienne@fi.uba.ar}, 
   \href{mailto:eernst@dc.uba.ar}{eernst@dc.uba.ar}, \href{mailto:lferrer@dc.uba.ar}{lferrer@dc.uba.ar}
  }
}

\begin{document}
\maketitle
\begin{abstract}
In high-stakes automated decision-making, access to predictive uncertainty is essential for enabling users -- human or downstream systems -- to accept or reject predictions based on application-specific cost trade-offs.
Such uncertainty-augmented (UA) systems -- i.e., systems that output both predictions and uncertainty scores -- are currently being assessed in the literature in a variety of ways, using separate metrics to evaluate the predictions and the uncertainty scores, setting a cost function with a fixed rejection cost or integrating over a coverage-risk curve. We argue that these evaluation approaches are inadequate for assessing overall performance of the UA system for decision making under uncertainty and propose a novel family of metrics, \ECUAS{n}, formulated as proper scoring rules for the task of interest. The parameter $n$ controls the trade-off between the cost of incorrect predictions and imperfect uncertainties depending on the needs of the use-case. We demonstrate the advantages of the \ECUAS{n} metrics both theoretically and empirically, through experiments on diverse classification and generation datasets, including a manually annotated subset of TriviaQA. 
\end{abstract}

\section{Introduction}

Modern machine learning systems are deployed across a wide range of settings, from automated decision-making pipelines to human-in-the-loop applications. In high-stakes scenarios, where incorrect outputs may have severe consequences, it is important for users to have access to information about the uncertainty with which a prediction is made, allowing them to accept or reject it depending on the error trade-off for their use case. As a result, a growing class of models now produce, alongside their candidate predictions, a measure of uncertainty or, a common counterpart, confidence. Properly evaluating these systems -- which we shall call uncertainty-augmented (UA) systems -- is therefore critical to ensure their usefulness, reliability, and safety.
We use the term UA system to refer both to traditional classifiers with a reject option -- sometimes called selective classifiers \cite{el-yaniv_foundations_2010} -- as well as modern generative systems that output an uncertainty or confidence score along with their generated answer \cite{lin2022teaching, tian-etal-2023-just, lin2024generating}. 
The goal of this paper is to develop a principled way to comprehensively assess the quality of UA systems grounded in decision theory which, as a large body of literature has argued \cite{good1952, savage1972foundations, Peterson_2017, russell2016artificial,dyrland2023doesevaluationstandevaluation}, is the rational way to make decisions and assess performance under uncertainty. 

In current literature, performance of UA systems is often measured through two or more separate metrics \cite{hendrycks_baseline_2017}. The quality of the candidate answers is assessed with accuracy or error rate, while the quality of the uncertainties is measured using binary classification metrics, taking the uncertainty as a predictor of incorrect answers. These metrics include the area under the ROC curve (AUC) \cite{duan2024, yang-etal-2025-maqa, xiong2024efficient,fadeeva-etal-2024-fact}, expected calibration error (ECE) \cite{jiang-etal-2021-know,stengel-eskin-van-durme-2023-calibrated,kapoor-etal-2024-calibration,gao-etal-2024-spuq}, Brier score \cite{kadavath2022, tian-etal-2023-just, liu2024litcab, ulmer-etal-2024-calibrating}, and cross-entropy \cite{NIPS2017_9ef2ed4b,malinin2021uncertainty,mielke-etal-2022-reducing,10.5555/3692070.3692835}. This evaluation strategy does not adequately reflect the value that the system provides to its user -- a human or a downstream stage -- who will use the uncertainties to decide whether to accept or reject the candidate answer. What is needed is a metric that reflects the quality of that final decision given by the candidate answer or a rejection. This can naturally be done based on the principles of decision theory.

Perhaps the earliest work taking a decision theoretical approach to classification with a reject option was Chow's 1957 paper on character recognition~\cite{chow_optimum_1957}. In that work, performance is measured through a cost function which includes costs for accepted incorrect answers depending on the ground truth, as well as a cost for rejection. Decisions are then made by minimizing the expectation of this cost over the predictive class distribution provided by a classifier. 
Later, El-Yaniv et al.~\cite{el-yaniv_foundations_2010} showed that the this cost function can be expressed as a combination of two values: the selective risk, defined as the expected misclassification cost of accepted predictions and the coverage, given by the probability that the prediction is accepted. Several other works followed Chow's proposal, using or studying the cost-based approach for evaluation in selective classification, e.g.,  \cite{geifman_selective_2017,franc_optimal_2023,Charoenphakdee2020,Bartlett2008}.

The cost-based approach to evaluation of UA system, while simple and principled, is often seen as problematic since it requires the specification of a rejection cost which may vary with the application or across different users or even instances. To solve this problem, Nadeem et al.~\cite{nadeem10} proposed to compute the area under the accuracy vs coverage curve, obtained by sweeping a threshold over the uncertainty and integrating over the resulting curve. This was later generalized to the area under the risk-coverage curve (AURC)~\cite{geifman2018biasreduced}, a widely used metric in the literature of UA systems \cite{geifman2018bias, jaeger2022call, bungert2023understanding, cheng2023unified, zhu2023openmix, varshney2020towards, naushad2024super, van2024beyond, van2023document, zhu2022rethinking, xin2021art, yoshikawa2023selective, ding2020revisiting, zhu2023revisiting, galil2021disrupting, franc2023optimal, cen2023devil, xia2022augmenting, cattelan2023fix, tran2022plex, kim2023unified, ashukha2020pitfalls, xia2024understanding}. Subsequent works proposed variants of the AURC to address some of its  shortcoming~\cite{zhou_novel_2025,traub_overcoming_2024}. Crucially, the AURC and its variants do not assume that the uncertainties will be given a probabilistic interpretation. In particular, it does not penalize uncertainties that are under or overestimated, as long  as the ranking is preserved. We see this as a major flaw, since uncertainties that do not faithfully reflect the probability of correctness are not easy to interpret. In this work, we construct evaluation metrics for UA systems that, like AURC, do not commit to a specific rejection cost but that, unlike the AURC, do reward uncertainties that are interpretable probabilistically. 

Our contributions can be summarized as follows:

\textbf{A principled replacement for the AURC grounded in decision theory.} In this work, we define the expected cost for uncertainty augmented systems (\ECUAS{n}), a family of metrics to evaluate the quality of UA systems based on first principles from Bayes decision theory~\cite{savage1961foundations}. The \ECUAS{n} metrics are given by the expectation of a cost function which is constructed as an integral over a family of cost functions parameterized by the rejection cost.
Crucially, these cost functions are proper scoring rules (PSR)~\cite{hendrickson1971proper, savage1972foundations,gneiting_strictly_2007}, a family of metrics proposed decades ago for the assessment of predictive distributions. As such, they reward systems that produce probabilistic uncertainties, a property that is missing from the widely-used AURC metric. The \ECUAS{n} metrics can be used for evaluation of  UA classifiers and UA generative systems.

\textbf{Insight on optimality of uncertainty scores.} 
We provide a theoretical explanation for the empirical observation reported in prior works that better uncertainty scores can be obtained for generative systems using the distribution over semantic equivalence classes rather than over individual predictions \cite{Farquhar2024,kuhn2023semantic}. To our knowledge, this is a novel theoretical explanation for this empirical finding.

\textbf{Empirical validation} We compare our proposed metrics with traditional metrics for this task on classification and generation tasks. For the latter, we use TriviaQA with manually curated correctness labels. We analyze several desirable properties of the \ECUAS{n} metrics and discuss the selection of the value of the $n$ parameter depending on the use-case. The code for our experiments can be found in \url{https://github.com/erikernst4/callm} and the manual TriviaQA annotations are available as supplementary material.

\section{The ECUAS family: Decision-theoretical metrics for UA systems}\label{sec:metric_construction}

Formally, we define the UA system under evaluation as a function $f^{\text{UA}}:\Xspace \rightarrow \tilde \Dspace \times \R_{\geq 0}$ that receives an input $x\in\Xspace$  and outputs a pair $(\tilde d, u)$ composed by a candidate response $\tilde d \in \tilde \Dspace$ as well as an uncertainty score $u \in \R_{\geq 0}$ related to that decision. Our goal is to construct a metric -- or, rather, as we will see, a family of metrics -- to assess the quality of these combined outputs. To that end, we will assume that the system's uncertainty $u$ will be used -- by an end user or by a downstream stage -- to accept or reject the output $\tilde d$, so that the final decision can take one of two values: the candidate decision $\tilde d$ or the reject event $d_r$. That is, the final decision is given by a function $h\triangleq d^{\text{UA}} \circ f^{\text{UA}}$, where $d^{\text{UA}}: \tilde \Dspace \times \R_{\geq 0} \rightarrow \Dspace$, where $\Dspace \triangleq \tilde \Dspace \cup \{d_r\}$. 

We start, like Chow \cite{chow_optimum_1957}, by defining a function $C_\gamma:\Yspace \times \Dspace \rightarrow \R_{\geq 0}$ which quantifies the cost for the final decision $d\in\Dspace$ given the sample's class $y\in\Yspace \triangleq \{y_1, \ldots, y_K\}$:
\begin{align}\label{eq:reject_cost}
    C_{\gamma}(y,d) = 
    \begin{cases}
        \tilde C(y,d) & \text{if } d \in \tilde \Dspace \\
        \gamma & \text{if } d = d_r
    \end{cases}
\end{align}
where $\gamma\in\R_{\geq 0}$ is a cost assigned to the decision to reject the system's candidate prediction and $\tilde C: \Yspace \times \tilde \Dspace \rightarrow \R_{\geq 0}$ is the cost of the candidate response $\tilde d\in\tilde\Dspace$ given the ground truth $y\in\Yspace$.

The discussion on how $\tilde C$ should be selected is out of the scope of this work, though it has been widely discussed in many prior publications \cite{Raiffa_1970,russel_2010,Kahneman:2011,Hunik_2014,Dyrland_2022}. As for $\gamma$, as discussed above, while a specific value can be chosen in some scenarios, in many cases we would like to leave this decision to the final user rather than fixing it during development. In section \ref{sec:psr_integral}, we construct a general metric that does not rely on a fixed $\gamma$.

Once the cost function for a given sample $x$ is defined, an evaluation metric for assessing the decision system $h$ defined above can be computed as the expectation of this cost with respect to the empirical distribution over an evaluation set $\{(x^{(i)},y^{(i)})\}_{i=1}^N$:
\begin{align}\label{eq:ecuas_gamma}
    \mathbb{E}[C_\gamma(y, h(x))] & = \frac{1}{N}\sum_{i=1}^{N} C_\gamma(y^{(i)}, h(x^{(i)}))  = R_{\gamma}\phi_{\gamma} + \gamma (1-\phi_{\gamma}) 
\end{align}
where $\phi_{\gamma} = 1/N \sum_i I(h(x^{(i)}) \neq d_r)$ is the coverage, the fraction of samples for which the candidate answer was accepted, and $R_{\gamma} = 1/(N\phi_\gamma)\sum_i  I(h(x^{(i)}) \neq d_r)\ C_\gamma(y^{(i)}, h(x^{(i)}))$  is the selective risk, the average cost over accepted samples \cite{el-yaniv_foundations_2010}. 
This expected cost measures the performance of a system given by the composition of a UA system $f^\text{UA}$ and a decision function $d^\text{UA}$, for a fixed $\gamma$. The  function $d^\text{UA}$ is a simple thresholding operation, rejecting the candidate answer when $u$ is larger than a threshold $t$ and accepting it otherwise. The threshold can be tuned empirically to optimize the cost above or set according to decision theory as described below.

The AURC metric was proposed in \cite{geifman2018biasreduced} as a way to comprehensively assess the performance of a UA system, without fixing $\gamma$ or $t$. It is computed as the area under the risk-coverage curve obtained by sweeping the threshold $t$. Notably, by design, this metric is invariant to monotonic transformations of the uncertainty score and, as a consequence, it does not reward UA systems that provide uncertainties that can be interpreted probabilistically. Here, we propose a novel family of metrics that, as the AURC, are comprehensive assessors of the UA system performance but that, unlike AURC, do reward interpretable uncertainty scores.
The metric is obtained by first constructing a PSR by minimizing the expected cost above (Section \ref{sec:psr_for_cgamma}), and then integrating the resulting PSRs over $\gamma$ (Section \ref{sec:psr_integral}).

\subsection{$C^*_\gamma$: A proper scoring rule derived from $C_\gamma$}
\label{sec:psr_for_cgamma}

Given an input $x\in\Xspace$ and a cost function $C$, the optimal decision under uncertainty is  that which minimizes the expectation of the cost function with respect to the conditional distribution over the classes for that input $x$, $\qvec : \Xspace \rightarrow \Simp_K$, where $\Simp_K$ refers to the simplex of dimension $K-1$:
\begin{equation}
  d_B(\qvec)\in \argmin_{d\in\Dspace} \E_{y\sim \qvec}[C_\gamma(y,d)] 
\end{equation}
A decision made in this way is called a Bayes decision.
Note that $\qvec$ always depends on the input $x$, though sometimes, as above, we suppress this dependence for notational clarity.

For the cost defined in \eqref{eq:reject_cost}, the Bayes decisions are given by
\begin{align}\label{eq:dB_final}
    d_B(\qvec) =
    \begin{cases}
        \tilde d_B(\qvec) & \text{if } u_{\tilde C}(\qvec) \leq \gamma \\
        d_r & \text{if } u_{\tilde C}(\qvec) > \gamma 
    \end{cases}
    \ \text{where}
\left.\begin{array}{@{}ll@{}}
  & u_{\tilde C}(\qvec) = \sum_{k=1}^K q_k \tilde C(y_k,\tilde d_B(\qvec)) \\
  & \tilde d_B(\qvec) \in \argmin_{\tilde d\in\tilde \Dspace} \E_{y\sim \qvec}[\tilde C(y,\tilde d)] 
\end{array}\right.
\end{align}

In the literature of selective classification, the expected cost $u_{\tilde C}(\qvec)$ is called {\bf uncertainty}.
The proof of this result is well-known \cite{franc_optimal_2023}, but it can also be found in Appendix  \ref{app:bayes_decisions_for_c_gamma} for completeness. 

\eqref{eq:dB_final} shows that Bayes decisions for our cost of interest  $C_\gamma$ can be made in two steps: for a given $\qvec$, first choose the Bayes decision $\tilde d_B(\qvec)$ corresponding to $\tilde C$, and then decide whether to accept or reject it based on the uncertainty value $u_{\tilde C}(\qvec)$ by comparing it to the threshold $\gamma$. 
That is, access to the full $\qvec$ is not required to make optimal decisions for $C_\gamma$.
Hence, a system which, for a given input $x$, outputs $f^{\text{UA}}(x)=(\tilde d, u)$,  with $\tilde d = \tilde d_B(\qvec(x))$ and $u=u_{\tilde C}(\qvec(x))$, can be used to make Bayes decisions according to $d_B$ above.  
If, on the other hand, $\tilde d$ or $u$ are not computed in this way, the decisions made with \eqref{eq:dB_final} by replacing $\tilde d_B(\qvec)$ with the system's $\tilde d$ and $u_{\tilde C}(\qvec)$ with the system's $u$, may be suboptimal for the given $\tilde C$ and $\gamma$.

This leads us to an interesting insight: When designing a form for the uncertainty, it is essential to consider the cost that will eventually be optimized by the user. When $\tilde C$ is given by  the 0-1 cost, $\tilde C_{01}(y_k, \tilde d_j) \triangleq I(y_k \neq \tilde d_j)$ with $\tilde \Dspace \equiv \Yspace$ where $\ind{\cdot}$ is the indicator function, the uncertainty is given by $u_{\tilde C}(\qvec) = 1 -q_e$, where $q_e\triangleq \max_k q_k$ is commonly called {\bf confidence} (see Appendix \ref{sec:proof_w_n01}). This is the value that is produced by many UA systems \cite{surveyuq_he}. If, on the other hand, we take the case in which $\tilde \Dspace = \Simp_K$ and $\tilde C(y_k,\qvec) = -\log(q_k)$ is the logarithmic loss, then $u_{\tilde C}(\qvec)$ is given by the {\bf entropy} of $\qvec$, which is also a commonly used uncertainty score \cite{surveyuq_he,macedo_entropy}. Importantly, systems that output a confidence or the entropy can be used to optimize the cost for the corresponding $\tilde C$, the 0-1 cost or the logarithmic loss, respectively. {\bf If the uncertainty calculation is mismatched to the $\tilde C$ of interest, the decisions made with those uncertainties may be suboptimal.} 

Given the expression for the Bayes decisions, for any cost function $C$, we can define a new cost function $C^*: \Yspace \times \Simp_K \rightarrow \R_{\geq 0}$, given by the cost of the Bayes decision made with $\qvec$: $C^*(y,\qvec)\triangleq C(y,d_B(\qvec))$. $C^*$ satisfies $\E_{y\sim \pvec}[C^*(y,\pvec)] \leq \E_{y\sim \pvec}[C^*(y,\qvec)]$ for every $\qvec,\pvec\in\Simp_K$, the defining property PSRs. 
Using a PSR as evaluation metric ensures that better probabilities are rewarded over poorer ones, a characteristic that we believe to be essential for UA systems that will be used for decision-making under uncertainty either by the final user or by a downstream stage.

The PSR corresponding to $C_\gamma$ is obtained by plugging \eqref{eq:dB_final} in \eqref{eq:reject_cost}:
\begin{align}\label{eq:psr_gamma_fixed}
C_\gamma^*(y,\qvec)\triangleq C_\gamma(y, d_B(\qvec)) = 
    \begin{cases}
        \tilde C(y,\tilde d_B(\qvec)) & \text{if } u_{\tilde C}(\qvec) \leq \gamma \\
        \gamma & \text{if } u_{\tilde C}(\qvec) > \gamma 
    \end{cases}
\end{align}
The expectation of this cost w.r.t. the empirical distribution can be used to assess the quality of UA systems for a given $\gamma$ by setting $\tilde d_B(\qvec) = \tilde d$ and $u_{\tilde C} = u$  in the right hand side, where $\tilde d$ and $u$ are the outputs of the UA system under evaluation, as defined above. 
This value will be optimal for a given $\qvec$ if the outputs of the UA system are computed -- internally, by the system -- according to~\eqref{eq:dB_final}. Otherwise, as we illustrate in Appendix \ref{app:classification_expts}, the expected cost increases with respect to that optimal value, adequately reflecting the sub-optimality of the system's outputs.

\subsection{$C_w$: A cost for scenarios in which the reject cost is unknown}
\label{sec:psr_integral}

\eqref{eq:psr_gamma_fixed} depends on a fixed cost $\gamma$ for rejecting a sample. This cost should be determined based on the needs of the application of interest, which may not be known at development time. Since the cost depends on the uncertainty only through thresholding with $\gamma$, two uncertainty functions that are monotonically related and coincide at the threshold would result in the same cost. Hence, a system that has a low cost for a given $\gamma=\gamma_0$ could have a high cost for a different $\gamma=\gamma_1$. We want a cost function that encourages the uncertainty to be useful for decision making for \emph{all} values of $\gamma$. In this section, we show how to construct a PSR that satisfies this condition.

An important property of PSRs is that a weighted combination of PSRs is also a PSR. That is, it can be shown that if $C_\gamma^*$ is a PSR and $w:[0,u_M] \rightarrow \R_{\geq 0}$ is an integrable non-negative function over the interval $[0,u_M]$ with $u_M \triangleq \max_{\qvec \in \Simp_K} u_{\tilde C}(\qvec)$, then
\begin{align} \label{eq:psr_integral}
    C^*_w(y,\qvec) & \triangleq \int_{0}^{u_M} w(\gamma)C^*_\gamma(y,\qvec)d\gamma \  = \int_0^{u_{\tilde C}(\qvec)}\gamma w(\gamma)d\gamma + \tilde C(y,\tilde d_B(\qvec))\int_{u_{\tilde C}(\qvec)}^{u_M}w(\gamma)d\gamma
\end{align}
is also a PSR (see, e.g., \cite{brummer_thesis}, and Appendix \ref{sec:proof_psr}). Values of $\gamma$ larger than $u_M$ would result in acceptance of all samples, regardless of $\qvec$. Hence, the integral is restricted to values of $\gamma$ that may result in at least some samples being rejected for some systems. 

We can now construct a variety of PSRs for our problem of interest by selecting different $w$ functions, which determine the importance of each $\gamma$ value. For instance, a Dirac delta function $\delta(\gamma-\gamma_0)$ at a specific $\gamma_0$ would recover $C_{\gamma_0}^*$. On the other hand, a $w$ that is non-zero between 0 and $u_M$ results in a weighted combination of $C_\gamma^*$ across all valid $\gamma$ values. A low value for such metric would indicate that the system works well for all values of~$\gamma$. 

For the experiments in this paper, we will focus on a family of metrics that is obtained when $w$ is parameterized as $w_n(\gamma) \triangleq \alpha_n \gamma^{n-1}$, for $n \geq 0$, where $\alpha_n \triangleq (n+1) u_M^{-(n+1)}$ is a scaling factor to make the cost equal 1 when the uncertainty is $u_M$. Plugging $w_n$ in \eqref{eq:psr_integral}, we get 
\begin{align}\label{eq:psr_integral_n}
    C^*_n(y,\qvec)\triangleq C^*_{w_n}(y,\qvec) = 
    \begin{dcases}
    \alpha_n \ u_{\tilde C}(\qvec) + \alpha_n  \left(\log u_M - \log u_{\tilde C}(\qvec)\right)\tilde C(y,\tilde d_B(\qvec)) & \text{if }  n=0\\
    \frac{\alpha_n}{n+1} u_{\tilde C}(\qvec)^{n+1} + \frac{\alpha_n}{n} \left(u_M^n - u_{\tilde C}(\qvec)^n \right)\tilde C(y,\tilde d_B(\qvec)) & \text{if }  n>0\\
    \end{dcases}
\end{align}
As for $C^*_\gamma$, the $C^*_n$ cost also depends on $\qvec$ only through the candidate answer, $\tilde d_B(\qvec)$, and the uncertainty, $u_{\tilde C}(\qvec)$. The derivation of this cost can be found in Appendix~\ref{sec:proof_w_n}.

For the special case in which $\tilde C$ is the 0-1 cost defined above, the Bayes decisions are given by $\tilde d_B(\qvec) = e \triangleq \arg \max_k q_k$, the uncertainty is given by $u_{\tilde C}(\qvec)  = 1 - q_e$ with $q_e=\max_k q_k$, and $u_M = 1 - q_m$ where $q_m=1/K$ is the minimum value that the posterior can take for a Bayes decision for a $K$-class problem. The expression for $C^*_n$ for this case is given by (see Appendix \ref{sec:proof_w_n01})
\begin{align} \label{eq:psr_integral_n01}
    C^*_{n01}(y,\qvec) = 
    \begin{dcases}
    \alpha_n (1-q_e) +  \alpha_n \left(\log (1-q_m) - \log (1-q_e)\right) I(y\neq e) & \text{if }  n=0 \\
    \frac{\alpha_n}{n+1} (1-q_e)^{n+1} + \frac{\alpha_n}{n} \left((1-q_m)^n - (1-q_e)^n\right)I(y\neq e) & \text{if }  n>0 
    \end{dcases}
\end{align}

Figure \ref{fig:cncag_vs_qe} shows the value of this cost as a function of $q_e$, for correct and incorrect candidate answers. The minimum value of $q_e$ in these plots is $q_m$: smaller values would be inconsistent with the fact that $e$ is the argmax decision. We show the plots for various $K$, including $K\rightarrow \infty$, in which case $q_m=0$. For all values of $n$, the minimum cost for an incorrect answer and the maximum cost for a correct answer are both 1, which occurs when the sample is rejected for having a $q_e=q_m$. Also for all $n$, the cost monotonically increases with the confidence for incorrect answers and decreases for correct answers -- it rewards systems that produce high confidence for correct answers and low confidence for incorrect ones.

The $n$ parameter adjusts the impact of the quality of the confidence in the cost value. When $n=0$, the metric penalizes very heavily large values of $q_e$ given to incorrect candidate answers. On the other hand, larger values of $n$ give milder penalties to such cases. Similarly, for correct cases, the impact of a low $q_e$ decreases as $n$ increases. 
In one extreme, when $n\rightarrow \infty$, the metric degenerates to the 0-1 cost, penalizing equally all answers of the same type, regardless of $q_e$, except when $q_e=q_m$, in which case both correct and incorrect candidates receive a cost of 1, since they would be rejected for any value of the rejection cost $\gamma$. In the other extreme, when $n=0$, the metric is much more sensitive on the confidence being useful for making accept/reject decisions. This is the setting we recommend for cases in which accepting an incorrect answer would have severe consequences.

\begin{figure}
    \centering
    \includegraphics[width=1.01\linewidth]{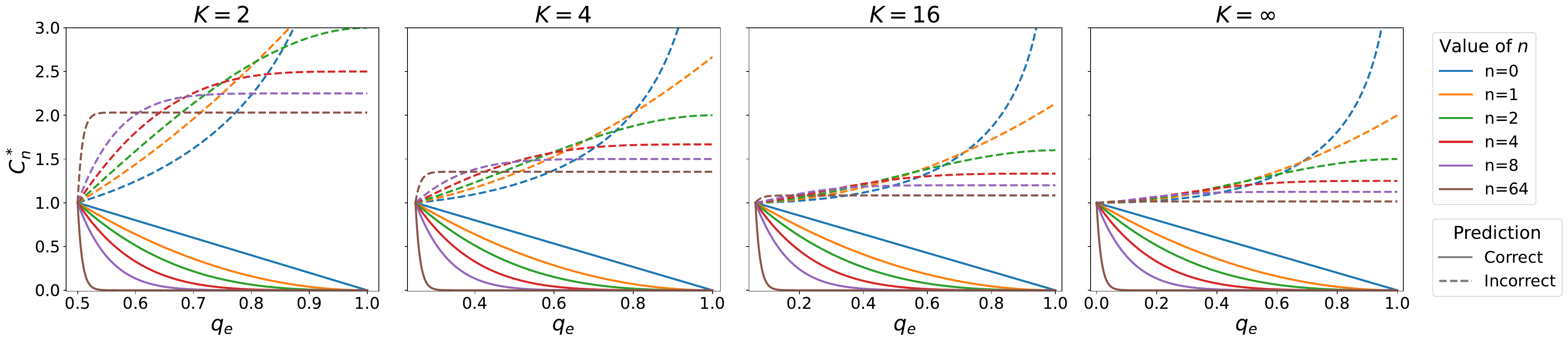}
    \caption{$C_{n}^*$ as a function of the confidence $q_e$, when candidate decisions are correct (solid lines) and incorrect (dashed lines), for different values of $n$, the parameter in $w$, and $K$, the number of classes.}
    \label{fig:cncag_vs_qe}
\end{figure}
 
Given an evaluation set $\{(x^{(i)},y^{(i)})\}_{i=1}^N$, we compute the expected cost for the uncertainty augmented system (ECUAS) as the mean of $C_n^*$ over this set:
\begin{align}
\text{ECUAS}_n = \frac{1}{N}\sum_{i=1}^{N} C^*_n(y^{(i)}, \qvec(x^{(i)})). \label{eq:ecuas}
\end{align}

\section{Application of ECUAS to generative systems}
\label{sec:generative}
\vspace{-2mm}

An important family of UA systems is that based on generative models \cite{survey-llm-nacl-2024,survey-llm-acl-2025,survey-llm-kdd-2025}. To use the ECUAS metrics in this scenario, we need to adapt the definition of $\tilde C$. 
While in classification scenarios it is usually assumed that there is a single correct decision for each class, in the generation scenario many different predictions may be correct for a given prompt. Further, $\Yspace$, the set of classes, cannot be explicitly defined for most applications for which generative systems are used.
To define $\tilde C$, we take $\Yspace$ to be the set of equivalence classes where each $y_k$ corresponds to a set of possible responses that are equivalent to each other. Finally, $\tilde \Dspace$ is the extremely large space of possible responses by the generative system. 
In practice, as we explain below, neither $\Yspace$ nor $\tilde \Dspace$ need to be specified explicitly.
 
Given these $\Yspace$ and $\tilde \Dspace$, we define the cost function for the candidate answers as $\tilde C(y,\tilde d) = I(y \neq \hat y(\tilde d))$, where $\hat y(\tilde d)$ is the equivalence class for $\tilde d$. This is a simple generalization of the 0-1 cost described above to the particular problem of generative systems. The average of this cost is the standard way in which error rate is computed in the literature of UA generative systems \cite{hendrycks_baseline_2017,surveyuq_he,traub_overcoming_2024}. Note that all we need to know to compute this cost is whether the candidate answer is considered correct or incorrect given the prompt and context. 
For this $\tilde C$, the uncertainty is given by $u_{\tilde C}(\qvec) = 1-q_e$, with $q_e = \max_k q_k$, as for the standard 0-1 cost. The Bayes decisions are given by $d_B(\qvec) \in \argmax_k q_k$, i.e., any element from most likely equivalence set (see Appendix \ref{sec:proof_w_n01g}). 

This leads us to a key insight: in order to make optimal accept/reject decisions for the generalized error rate, the confidence $q_e$ needs to be computed as the probability of the equivalence class corresponding to the candidate answer, $\hat y(d)$, rather than as the confidence of the specific response $d$. Similarly, if we were using the logarithmic cost instead of the 0-1 cost, the uncertainty for optimal decision-making would be given by the entropy of $\qvec$ computed over equivalence classes.
This implies that systems that compute the uncertainty or the confidence from the distribution over possible responses are potentially suboptimal.
This observation has been made empirically in prior work \cite{kuhn2023semantic, Farquhar2024}. Our work provides a novel theoretical explanation for these prior empirical results.

As observed above, we do not need to specify $\Yspace$ or $\tilde \Dspace$ explicitly. We only need to be able to determine correctness of a candidate answer, the same label that is needed to compute the standard metrics reported in the literature. Yet, we do need to know $K$, the number of possible equivalence classes, which is required to compute $q_m$ and $\alpha_n$.  This number is unknown in most scenarios in which generative systems are used, but it can usually be assumed to be large. 
In our experiments, we will take $K\rightarrow \infty$ to compute the metrics for generative systems, though much smaller values of $K$ would lead to similar results as shown in Figure \ref{fig:cncag_vs_qe}.

\section{Empirical analysis}
\label{sec:experiments}
\vspace{-2mm}

In this section, we show results for three instances of the proposed \ECUAS{n} metric, \eqref{eq:ecuas}, setting $n$ equal to 0, 1 and 128 and $\tilde C$ to the standard 0-1 cost for the classification experiments and the generalized 0-1 cost for the generation experiments. We compare these results with the main metrics used in the literature of UA systems which, as described in the introduction, mostly assess the quality of the systems with separate metrics for the quality of the answers and the uncertainties. To evaluate the quality of the candidate answer the most common metric in this literature -- by a large margin -- is accuracy (ACC), or error rate ({\bf ER} = 1-ACC). For the uncertainty or the confidence, the most common metrics are: area under the ROC curve ({\bf AUC}), which measures only the ability of the score to separate correct from incorrect answers; expected calibration error ({\bf ECE}), which measures the degree to which the confidences could be made into better probabilities by transforming them with a histogram binning transform; and the Brier score ({\bf $\text{BS}_{q_e}$}) and Cross entropy ({\bf $\text{CE}_{q_e}$}), two PSRs which measure both discrimination and calibration quality of the confidences \cite{ferrer2025evaluating}. Note that, as the \ECUAS{n}, the $\text{BS}_{q_e}$ and the $\text{CE}_{q_e}$ are also PSRs but they are computed only over the confidence, not over the combined system output including the candidate answer. Hence, they do not adequately reflect the performance of the system as will be experienced by the user. Finally, we include the {\bf AURC}, the main metric  used in the literature to assess the quality of the full system.

It is important to note that, despite it being a wide-spread practice in the literature \cite{stengel-eskin-van-durme-2023-calibrated,NEURIPS2018_285e19f2,kapoor-etal-2024-calibration,kumar2019calibration,liu2024litcab}, confidence-based metrics like the ECE, AUC, $\text{BS}_{q_e}$ and $\text{CE}_{q_e}$, should not be compared across systems unless their candidate predictions are identical. This is because the ground truth labels used to compute them are given by whether the candidate predictions are correct or not. Hence, {\bf comparing the AUC of the confidence between two systems with different candidate predictions is the same as comparing the AUC of two systems on two different datasets}. One AUC may be better than the other because the data was easier, not because the system is better. This, along with the fact that these metrics do not reflect the overall performance of the system, makes these metrics unfit for evaluation of UA systems. We nevertheless show these metrics in our tables to illustrate how conclusions based on them may greatly differ from the conclusions obtained based on the principled metrics proposed in this work, highlighting the importance of choosing the right metric for the task of interest. 

For systems that explicitly compute the full $\qvec$, as in classification, it is also possible to compute BS and CE for the full $\qvec$ rather than just for $q_e$. We call these metrics $\text{BS}_{\qvec}$ and $\text{CE}_{\qvec}$. These PSRs reflect the quality of the underlying distribution which is then used to choose the candidate answer and produce the uncertainty. Yet, as discussed above, the full $\qvec$ is not available for generative systems so those metrics cannot be computed in that scenario. Further, while they are also PSRs, they do not directly reflect the performance for use-cases with a reject option. Hence, even when the full $\qvec$ is available, for such scenarios, the \ECUAS{n} metric may be more appropriate.

Below we show results for both classifier and generative model-based UA systems. 
Additional results can be found in Appendix \ref{app:classification_expts} and \ref{app:generative}. In particular, in Appendix \ref{app:classification_expts}, we explore the impact of the quality of the candidate answers on the ECUAS metrics, showing that they have the expected behavior, monotonically increasing as the quality of the decisions degrade. 

\vspace{-2mm}

\subsection{Analysis on UA classification systems}
\label{sec:experiments_classification}
\vspace{-2mm}
We show results on a variety of systems taken from \cite{ferrer2025no} for language, image and speech processing tasks. For each system, we show results on the raw scores provided by the model and on scores calibrated using a post-hoc affine calibration approach. Calibration improves the quality of the predictive distribution $\qvec$, improving the quality of both candidate predictions and confidences.
The candidate answers are selected using the argmax rule. Table \ref{tab:classification_subset} shows a subset of the results. A larger table of results along with a description of the datasets and the calibration procedure can be found in Appendix~\ref{app:classification_expts}. For this section, we report normalized versions of the ECUAS, ER, both $\text{BS}$, and both $\text{CE}$, obtained by dividing the raw metric value by the value for a naive system that always outputs the empirical prior distribution on the test data $\qvec_0$, as suggested in \cite{ferrer2025evaluating}. This normalization conveniently scales the metric values so that a value of 1.0 indicates that the system is as poor as the naive system. 

Table \ref{tab:classification_subset} shows that, as explained in Section \ref{sec:psr_integral}, the proposed cost when $n$ is large tends to the 0-1 cost so that its expectation is similar to the error rate (in this case, both normalized). For smaller values of $n$, the metric becomes sensitive to the quality of the confidences, integrating into a single value the impact of both incorrect candidate predictions and imperfect confidences.
When $n=0$ the ECUAS behavior becomes similar to that of the CE$_{q_e}$, with differences due to the fact that \ECUAS{0} is also affected by the quality of the candidate answers rather than just the confidences.  Analogously, the \ECUAS{1} shows similar but not identical trends to the BS$_{q_e}$ metric. Compare, for example, the trends for the raw CIFAR-10 scores where ResNet20 is better than VGG19 in terms of CE$_{q_e}$ and \ECUAS{0} (in \cola{red}), but worse in terms of BS$_{q_e}$ and \ECUAS{1} (in \colb{green}). These latter two metrics penalize the miscalibration in the raw VGG19 scores, which are overconfident due to overfitting of the model being 100 times larger than the ResNet20 model, less than the former two.

Comparing \ECUAS{0} and \ECUAS{1}  with CE$_{\qvec}$ and BS$_{\qvec}$, respectively, we again see similar trends. This is expected since they are all PSRs and good-quality $\qvec$, as evaluated by CE$_{\qvec}$ and BS$_{\qvec}$, results in good-quality candidate predictions and confidences, as evaluated by \ECUAS{0} and \ECUAS{1}. Yet, while the trends are similar, the values differ. For example, while CE$_{\qvec}$ for the raw scores for AGNews is 0.814, safely away from the naive system's performance, the \ECUAS{0} for that system is 1.005, indicating that its performance is worse than that of the naive system (results marked in \colc{blue}). That is, for our specific use case of classification with a reject option with the 0-1 cost for accepted answers, this system has the same performance as a naive system and, hence, is useless in practice. This fact is not diagnosed by the PSRs computed over $\qvec$.

Finally, we can see that the AURC is mostly unaffected by calibration even when the ECE greatly improves (compare \emph{raw} to \emph{cal} lines for every system, for example, cells in \cold{orange}), except when the transform also affects the quality of the decisions, which happens when it includes a non-zero shift term (see \cole{brown} cells).  
This is because AURC only rewards improvements in the quality of the confidences through the impact that it has on the decisions. When the calibration transform does not improve the quality of the decisions, the AURC is unchanged. This is the case when the transform's shift term is close to zero, degenerating into temperature scaling, keeping the ranking intact and, in particular, keeping the argmax decision intact. This invariance makes this metric inadequate for evaluating confidences that will be interpreted as correctness probabilities.

\begin{table}[t]
\centering
\caption{Standard and proposed metrics on classification systems. The prefix ``N-'' indicates normalization. The first line in the table indicates the variable that is evaluated by the metrics in each block. ER evaluates the candidate predictions ($\tilde d$), AUC, ECE, BS$_{q_e}$, and CE$_{q_e}$ evaluate different aspects of the confidence ($q_e$), BS$_\qvec$ and CE$_\qvec$ evaluate the full predictive distribution ($\qvec$), and AURC and \ECUAS{n} jointly evaluate the candidate answers and the confidences.}
\resizebox{\columnwidth}{!}{%
\begin{tabular}{lll|c|cccc|cc|c|ccc}
\toprule
& & & $\tilde d$  & \multicolumn{4}{c|}{$q_e$}  & \multicolumn{2}{c|}{$\qvec$} & $\tilde d$, $q_e$ & \multicolumn{3}{c}{$\tilde d$, $q_e$} \vspace{1mm} \\
 &  &  &  &  &  &  &  &  &  &  & \multicolumn{3}{c}{N-\ECUAS{n}} \\
  {\bf Data}  & \multicolumn{2}{l|}{\bf Model \ \ \ \ \ \ \ \ \ Score} & \textbf{N-ER} & \textbf{ECE} & \textbf{AUC} & \textbf{N-CE$_{q_e}$} & \textbf{N-BS$_{q_e}$} & \textbf{N-CE$_\qvec$} & \textbf{N-BS$_\qvec$} & \textbf{AURC} & \textbf{n=0} & \textbf{n=1} & \textbf{n=128} \\
\midrule
 \multirow[t]{2}{*}{FVCAUS} & \multirow[t]{2}{*}{PLDA} & raw & 3.9155 & 0.0178 & 0.9055 & 0.6929 & \textbf{1.5846} & 1.9664 & 2.9438 & 0.0095 & 1.7876 & 2.9438 & 3.9162 \\
 &  & cal & \textbf{0.0132} & \textbf{0.0001} & \textbf{0.9994} & \textbf{0.3234} & 1.6018 & \textbf{0.0079} & \textbf{0.0108} & \textbf{0.0000} & \textbf{0.0074} & \textbf{0.0108} & \textbf{0.0133} \\
\midrule
\multirow[t]{2}{*}{AGNews} & \multirow[t]{2}{*}{GPT-2} & raw & 0.7796 & \cole{0.1844} & 0.6431 & 1.0539 & 2.1339 & \colc{0.8138} & 0.8894 & \cole{0.4352} & \colc{1.0045} & 0.9803 & 0.7857 \\
 &  & cal & \textbf{0.3786} & \cole{\textbf{0.0362}} & \textbf{0.7001} & \textbf{0.9325} & \textbf{1.8136} & \textbf{0.5353} & \textbf{0.5414} & \cole{\textbf{0.1724}} & \textbf{0.7111} & \textbf{0.5802} & \textbf{0.3816} \\
\midrule
\multirow[t]{6}{*}{CIFAR-100} 
 & \multirow[t]{2}{*}{ResNet20} & raw & 0.3148 & 0.1033 & 0.8360 & 0.8203 & 1.5483 & 0.2661 & 0.4497 & 0.1115 & 0.6054 & 0.4811 & 0.3173 \\
 &  & cal & 0.3167 & \textbf{0.0163} & 0.8423 & 0.7128 & 1.3798 & 0.2434 & 0.4299 & 0.1104 & 0.5544 & 0.4650 & 0.3191 \\

 & \multirow[t]{2}{*}{VGG19} & raw & 0.2639 & \cold{0.1968} & 0.8671 & 1.6021 & 2.0075 & 0.3919 & 0.4494 & \cold{0.0824} & 0.9697 & 0.4590 & 0.2660 \\
 &  & cal & 0.2657 & \cold{0.0439} & 0.8570 & \textbf{0.7062} & \textbf{1.3265} & 0.2475 & 0.3746 & \cold{0.0916} & 0.5008 & 0.3941 & 0.2677 \\
 & \multirow[t]{2}{*}{RepVGG} & raw & \textbf{0.2273} & 0.0558 & \textbf{0.8741} & 0.7509 & 1.3781 & 0.1999 & 0.3269 & \textbf{0.0611} & 0.4704 & 0.3476 & \textbf{0.2290} \\
 &  & cal & 0.2277 & 0.0478 & 0.8722 & 0.7297 & 1.3690 & \textbf{0.1982} & \textbf{0.3262} & 0.0618 & \textbf{0.4598} & \textbf{0.3473} & 0.2295 \\
\midrule
\multirow[t]{6}{*}{CIFAR-10} & \multirow[t]{2}{*}{ResNet20} & raw & 0.0822 & 0.0382 & 0.9216 & 0.7942 & 1.5977 & \cola{0.1223} & \colb{0.1319} & 0.0092 & \cola{0.2368} & \colb{0.1407} & 0.0829 \\
 &  & cal & 0.0839 & \textbf{0.0070} & \textbf{0.9233} & \textbf{0.6123} & \textbf{1.4360} & 0.1011 & 0.1243 & 0.0093 & 0.1900 & 0.1364 & 0.0845 \\

 & \multirow[t]{2}{*}{VGG19} & raw & 0.0677 & 0.0504 & 0.9209 & 1.2340 & 1.8889 & \cola{0.1528} & \colb{0.1237} & 0.0075 & \cola{0.3118} & \colb{0.1268} & 0.0682 \\
 &  & cal & 0.0683 & 0.0128 & 0.9184 & 0.6892 & 1.5651 & 0.1033 & 0.1103 & 0.0081 & 0.1804 & 0.1165 & 0.0689 \\

 & \multirow[t]{2}{*}{RepVGG} & raw & \textbf{0.0526} & 0.0318 & 0.9217 & 0.8760 & 1.6854 & 0.0921 & 0.0889 & \textbf{0.0061} & 0.1859 & 0.0936 & \textbf{0.0530} \\
 &  & cal & 0.0528 & 0.0084 & 0.9146 & 0.6380 & 1.4953 & \textbf{0.0736} & \textbf{0.0824} & 0.0069 & \textbf{0.1390} & \textbf{0.0887} & 0.0532 \\
\bottomrule
\end{tabular}
}
\label{tab:classification_subset}
\end{table}

\vspace{-1mm}
\subsection{Analysis on UA generative systems}\label{sec:gen-exps}
\vspace{-1mm}

In this section, we compare UA generative systems for the TriviaQA task \cite{triviaqa}, which consists of trivia questions from various categories. This dataset is commonly used in papers that focus on the development of approaches to produce confidence scores \cite{kadavath2022,kuhn2023semantic,tian-etal-2023-just,lin_2024,duan2024,xie-etal-2024-calibrating}. 
In those works, correctness is evaluated using semantic equivalence to the gold standard answers provided with the dataset. In preliminary experiments for this work, we found that such automatic correctness labels were often wrong. Hence, for this work, we use a small curated subset of TriviaQA with 455 samples for which the correctness of our systems was assessed manually, as described in detail in Appendix \ref{app:triviaqa_subsample}, where we also show that these errors greatly affect the ranking of systems based on both standard metrics and our proposed metric. To complement the results from this section, Appendix \ref{app:generative} includes extended results for TriviaQA and results  on MMLU \cite{mmlu}, a language understanding dataset.  

We compare results for systems that use three ways to elicit a confidence score \cite{tian-etal-2023-just}. \textbf{Sequence Posterior} extracts the joint probability of the generated token sequence. \textbf{Is True} builds a prompt that requests the model to respond whether the proposed answer is correct and extracts the probability of the True/False token. \textbf{Verbalized} explicitly prompts the model to output a confidence probability as a numerical value. The prompts used for each strategy are detailed in Appendix~\ref{sec:prompts}. 

\begin{table}[t]
\centering
\caption{Results for different metrics on Trivia QA using different LLMs to produce the candidate predictions and three methods to obtain the confidence.}
\resizebox{\columnwidth}{!}{%
\begin{tabular}{ll|c|cccc|c|ccc}
\toprule
& & $\tilde d$ & \multicolumn{4}{c|}{$q_e$} & $\tilde d$, $q_e$ & \multicolumn{3}{c}{$\tilde d$, $q_e$} \vspace{1mm}
\\
& & & & & & & & \multicolumn{3}{c}{ECUAS$_n$} \\
\textbf{LLM} & \textbf{Method} & \textbf{ER} & \textbf{ECE} & \textbf{AUC} & \textbf{CE$_{q_e}$} & \textbf{BS$_{q_e}$} & \textbf{AURC} & \textbf{n=0} & \textbf{n=1} & \textbf{n=128} \\
\midrule
\multirow{3}{*}{Qwen3.5 9B} & Seq. Post. & 0.251 & 0.164 & \textbf{0.923} & \textbf{0.403} & \textbf{0.132} & 0.070 & \textbf{0.484} & \textbf{0.382} & 0.252 \\
 & Is True & 0.251 & \textbf{0.109} & 0.889 & 0.422 & 0.139 & \textbf{0.061} & 0.496 & 0.389 & 0.253 \\
 & Verbalized & \textbf{0.246} & 0.203 & 0.756 & 0.800 & 0.212 & 0.101 & 0.823 & 0.459 & \textbf{0.248} \\
\midrule
\multirow{3}{*}{Ministral-3-8B-Instruct-2512} & Seq. Post. & 0.237 & 0.306 & 0.880 & 0.874 & 0.234 & 0.227 & 0.584 & 0.471 & 0.260 \\
 & Is True & 0.237 & \textbf{0.063} & \textbf{0.895} & \textbf{0.373} & \textbf{0.109} & \textbf{0.073} & \textbf{0.430} & \textbf{0.347} & 0.243 \\
 & Verbalized & \textbf{0.231} & 0.159 & 0.658 & 0.579 & 0.191 & 0.673 & 0.594 & 0.422 & \textbf{0.233} \\
\midrule
\multirow{3}{*}{Gemini 2.5 Flash Lite} & Seq. Post. & 0.119 & \textbf{0.043} & \textbf{0.891} & \textbf{0.232} & \textbf{0.063} & \colc{0.025} & \cola{\textbf{0.272}} & \textbf{0.182} & 0.120 \\
 & Is True & 0.119 & 0.089 & 0.881 & 0.625 & 0.087 & 0.026 & 0.573 & 0.205 & 0.132 \\
 & Verbalized & \textbf{0.108} & 0.068 & 0.853 & 0.445 & 0.080 & \colc{\textbf{0.024}} & \cola{0.462} & 0.188 & \textbf{0.109} \\
\midrule
\multirow{3}{*}{Gemini 2.5 Flash} & Seq. Post. & 0.064 & 0.066 & \textbf{0.855} & \textbf{0.209} & 0.057 & \textbf{0.015} & \textbf{0.213} & 0.121 & 0.064 \\
 & Is True & 0.064 & 0.053 & 0.813 & 0.588 & \textbf{0.053} & 0.031 & 0.449 & 0.117 & 0.081 \\
 & Verbalized & \textbf{0.059} & \textbf{0.049} & 0.743 & 0.532 & 0.054 & 0.016 & 0.503 & \textbf{0.113} & \textbf{0.062} \\
\bottomrule
\end{tabular}
}

\label{tab:triviaqa_subset}
\end{table}

Table~\ref{tab:triviaqa_subset} shows the results for the same metrics as above, except the ones based on $\qvec$ since generative systems do not explicitly calculate the full predictive distribution. In this case, we do not use normalization since there is no prior class distribution to use as reference for the naive system. 
As for the classification results, we can see that the ranking of methods highly depends on the selected metric. While the Verbalized method consistently has the best ER, it is rarely the best in terms of the quality of the confidences. This poses a conflict when using the evaluation approach based on separate metrics for candidate predictions and confidences: which system should we choose? The one with better predictions or the one with better confidences and, in that case, better in terms of which metric? The AURC  provides one way to integrate both aspects of the system. Yet, as discussed above, this metrics is insensitive to monotonic transformations of the confidences. In particular, it is insensitive to miscalibration caused by overfitting. Hence, for example, AURC shows similar values for the Verbalized approach compared to the Sequence Posterior approach for Gemini 2.5 Flash Lite (cells in \colc{blue}). In contrast, for that LLM, the Verbalized approach is markedly worse than the Sequence Posterior method in terms of \ECUAS{0} (cells in \cola{red}), since those confidences are much better calibrated as well as more discriminative. If the goal is to develop a system that produces confidences that can be interpreted as probabilities of correctness, then the AURC metric cannot be used for evaluation. The \ECUAS{n} metrics with a relatively small $n$ value, on the other hand, adequately address this need, while also integrating the impact of incorrect predictions.

\section{Conclusions}
We introduced \ECUAS{n}, a novel family of metrics grounded in statistical decision theory for evaluating uncertainty-augmented (UA) systems, including classifier and generative model-based systems. UA systems allow users to make informed decisions to accept or reject predictions based on the uncertainty and a given rejection trade-off that depends on the application. The \ECUAS{n} metrics are designed to provide a comprehensive assessment of the quality of the system, integrating the impact of incorrect predictions as well as imperfect uncertainties. 

The general metric expression is constructed as a weighted integral of the cost of Bayes decisions for different rejection costs. As a direct consequence of this construction process,  the resulting metrics are proper scoring rules, which implies that they reward uncertainties that are probabilistically interpretable and optimal for decision making. A specific metric from the family can be selected for the application of interest through the parameter $n$, which controls the trade-off between the cost of incorrect predictions and imperfect uncertainties. For high-stakes scenarios, we recommend using $n=0$, which severely penalizes incorrect decisions made with high certainty. 

Through theoretical analysis and empirical results we illustrate the impact of this parameter, showing that smaller values of $n$ may be most appropriate for high-stakes applications, where making incorrect predictions with high-confidence can have severe consequences. An additional contribution of this work is a theoretical motivation for the reported empirical fact that, in generation tasks, uncertainty should be computed over semantic-equivalence sets -- an insight that directly derives from the metric construction process as the cost of Bayes decisions.

\subsection*{Limitations}
\vspace{-2mm}
The proposed family of metrics can be applied to any UA system. The critical assumption made to construct the metric is that the user will make decisions under the framework of statistical decision theory, aiming to minimize the cost. If, on the other hand, the user plans to determine the optimal rejection threshold empirically, and has no need for interpretable uncertainty scores, then these metrics may not be appropriate. 




\bibliography{bib/confidence_extraction,bib/selective_clsf_metrics,bib/custom,bib/std, bib/all-short-3}

@string{ai = {Artificial Intelligence}}

@string{cl = {Computational Linguistics}}

@string{it = {IEEE Trans.\ Inform.\ Theory}}

@inproceedings{Charoenphakdee2020,
  title={Classification with Rejection Based on Cost-sensitive Classification},
  author={Nontawat Charoenphakdee and Zhenghang Cui and Yivan Zhang and Masashi Sugiyama},
  booktitle={International Conference on Machine Learning},
  year={2020},
  url={https://api.semanticscholar.org/CorpusID:225041187}
}

@InProceedings{nadeem10,
  title = 	 {Accuracy-Rejection Curves (ARCs) for Comparing Classification Methods with a Reject Option},
  author = 	 {Nadeem, Malik Sajjad Ahmed and Zucker, Jean-Daniel and Hanczar, Blaise},
  booktitle = 	 {Proceedings of the third International Workshop on Machine Learning in Systems Biology},
  pages = 	 {65--81},
  year = 	 {2009},
  editor = 	 {Džeroski, Sašo and Guerts, Pierre and Rousu, Juho},
  volume = 	 {8},
  series = 	 {Proceedings of Machine Learning Research},
  address = 	 {Ljubljana, Slovenia},
  month = 	 {05--06 Sep},
  publisher =    {PMLR},
  url = 	 {https://proceedings.mlr.press/v8/nadeem10a.html},
}

@article{Bartlett2008,
  title={Classification with a Reject Option using a Hinge Loss},
  author={Peter L. Bartlett and Marten H. Wegkamp},
  journal={J. Mach. Learn. Res.},
  year={2008},
  volume={9},
  pages={1823-1840},
  url={https://api.semanticscholar.org/CorpusID:16963069}
}

@book{russel_2010,
	author = {Russell, Stuart and Norvig, Peter},
	publisher = {Prentice Hall},
	title = {Artificial Intelligence: A Modern Approach},
	year = 2010}

@book{Hunik_2014,
	author = {Hunink, M. G. Myriam and Weinstein, Milton C. and Wittenberg, Eve and Drummond, Michael F. and Pliskin, Joseph S. and Wong, John B. and Glasziou, Paul P.},
	edition = {2},
	place = {Cambridge},
	publisher = {Cambridge University Press},
	title = {Decision Making in Health and Medicine: Integrating Evidence and Values},
	year = {2014}}

@article{Raiffa_1970,
	author = {Mouchart M. H. Raiffa},
	journal = {Recherches {\'e}conomiques de Louvain},
	number = {5},
	pages = {527--528},
	title = {Decision Analysis. Introductory Lectures on Choices under Uncertainty},
	volume = {36},
	year = {1970}}

@misc{Dyrland_2022,
	author = {Dyrland, Kjetil and Lundervold, Alexander Selvikv{\aa}g and Porta Mana, PierGianLuca},
	howpublished = {arXiv:2302.12006},
	month = may,
	title = {Does the evaluation stand up to evaluation? A first-principle approach to the evaluation of classifiers},
	year = {2022}}

@inproceedings{AGnews,
	address = {New York},
	author = {A. Gulli},
	booktitle = {Special Interest Tracks and Posters of the 14th International Conference on World Wide Web},
	location = {Chiba, Japan},
	pages = {880--881},
	title = {The anatomy of a news search engine},
	year = {2005}}

@inproceedings{zhang2015,
	author = {Zhang, Xiang and Zhao, Junbo and LeCun, Yann},
	booktitle = {Proc.~of NeurIPS},
	title = {Character-level convolutional networks for text classification},
	year = {2015}}

@inproceedings{sst2,
	address = {Seattle, Washington, USA},
	author = {Socher, Richard and Perelygin, Alex and Wu, Jean and Chuang, Jason and Manning, Christopher D. and Ng, Andrew and Potts, Christopher},
	booktitle = {Proceedings of the 2013 Conference on Empirical Methods in Natural Language Processing},
	month = oct,
	pages = {1631--1642},
	title = {Recursive Deep Models for Semantic Compositionality Over a Sentiment Treebank},
	year = {2013}}

@book{savage1972foundations,
	author = {Savage, Leonard J},
	publisher = {Courier Corporation},
	title = {The foundations of statistics},
	year = {1972}}

@book{Kahneman:2011,
	author = {Kahneman, Daniel},
	id = {9910114919702121},
	publisher = {1st ed. New York : Farrar, Straus and Giroux},
	title = {Thinking, fast and slow},
	url = {https://search.library.wisc.edu/catalog/9910114919702121},
	year = {2011}}

@article{Morrison_2012,
	author = {G. Stewart Morrison and P. Rose and C. Zhang},
	journal = {Australian Journal of Forensic Sciences},
	month = {June},
	number = {2},
	pages = {155--167},
	publisher = {Informa {UK} Limited},
	title = {Protocol for the collection of databases of recordings for forensic-voice-comparison research and practice},
	volume = {44},
	year = 2012}

@article{DawidMusio2014,
	author = {Dawid, Alexander Philip and Musio, Monica},
	issn = {2281-695X},
	journal = {METRON},
	month = {Apr},
	number = {2},
	pages = {169--183},
	publisher = {Springer Science and Business Media LLC},
	title = {Theory and applications of proper scoring rules},
	volume = {72},
	year = {2014}}

@inproceedings{guo:17,
	address = {Sydney, Australia},
	author = {Chuan Guo and Geoff Pleiss and Yu Sun and Kilian Q. Weinberger},
	booktitle = {Proc.~of the 34th International Conference on Machine Learning},
	title = {On Calibration of Modern Neural Networks},
	year = {2017}}

@inproceedings{sitwdb,
	author = {M. McLaren and L. Ferrer and D. Castan and A. Lawson},
	crossref = {Interspeech:2016},
	title = {The Speakers in the Wild {(SITW)} Speaker Recognition Database},
	url_paper = {https://www.sri.com/wp-content/uploads/pdf/final2c_the_speakers_in_the_wild_28sitw29_speaker_recognition_database.pdf},
	year = {2016}}

@phdthesis{brummer_thesis,
	author = {N. Br\"ummer},
	school = {Stellenbosch University},
	title = {Measuring, Refining and Calibrating Speaker and Language Information Extracted from Speech},
	year = {2010}}

@misc{morrison2015forensic,
	author = {Morrison, G.S. and Zhang, C. and {Enzinger et. al}, E.},
	howpublished = {\url{http://databases.forensic-voice-comparison. net}},
	title = {Forensic database of voice recordings of 500+ Australian English speakers},
	year = {2015}}

@proceedings{Interspeech:2016,
	address = {San Francisco},
	booktitle = {Proc.~Interspeech},
	month = {Sept.},
	title = {Proc.~Interspeech},
	year = {2016}}

@inproceedings{kuhn2023semantic,
title={Semantic Uncertainty: Linguistic Invariances for Uncertainty Estimation in Natural Language Generation},
author={Lorenz Kuhn and Yarin Gal and Sebastian Farquhar},
booktitle={The Eleventh International Conference on Learning Representations },
year={2023},
url={https://openreview.net/forum?id=VD-AYtP0dve}
}

@misc{kadavath2022,
      title={Language Models (Mostly) Know What They Know}, 
      author={Saurav Kadavath and Tom Conerly and Amanda Askell and Tom Henighan and Dawn Drain and Ethan Perez and Nicholas Schiefer and Zac Hatfield-Dodds and Nova DasSarma and Eli Tran-Johnson and Scott Johnston and Sheer El-Showk and Andy Jones and Nelson Elhage and Tristan Hume and Anna Chen and Yuntao Bai and Sam Bowman and Stanislav Fort and Deep Ganguli and Danny Hernandez and Josh Jacobson and Jackson Kernion and Shauna Kravec and Liane Lovitt and Kamal Ndousse and Catherine Olsson and Sam Ringer and Dario Amodei and Tom Brown and Jack Clark and Nicholas Joseph and Ben Mann and Sam McCandlish and Chris Olah and Jared Kaplan},
      year={2022},
      eprint={2207.05221},
      archivePrefix={arXiv},
      primaryClass={cs.CL},
      url={https://arxiv.org/abs/2207.05221}, 
}

@article{lin2022teaching,
title={Teaching Models to Express Their Uncertainty in Words},
author={Stephanie Lin and Jacob Hilton and Owain Evans},
journal={Transactions on Machine Learning Research},
year={2022},
url={https://openreview.net/forum?id=8s8K2UZGTZ},
note={}
}

@inproceedings{tian-etal-2023-just,
    title = "Just Ask for Calibration: Strategies for Eliciting Calibrated Confidence Scores from Language Models Fine-Tuned with Human Feedback",
    author = "Tian, Katherine  and
      Mitchell, Eric  and
      Zhou, Allan  and
      Sharma, Archit  and
      Rafailov, Rafael  and
      Yao, Huaxiu  and
      Finn, Chelsea  and
      Manning, Christopher",
    editor = "Bouamor, Houda  and
      Pino, Juan  and
      Bali, Kalika",
    booktitle = "Proceedings of the 2023 Conference on Empirical Methods in Natural Language Processing",
    month = dec,
    year = "2023",
    address = "Singapore",
    publisher = "Association for Computational Linguistics",
    url = "https://aclanthology.org/2023.emnlp-main.330/",
    doi = "10.18653/v1/2023.emnlp-main.330",
    pages = "5433--5442"
}

@inproceedings{lin_2024,
    title = "Contextualized Sequence Likelihood: Enhanced Confidence Scores for Natural Language Generation",
    author = "Lin, Zhen  and
      Trivedi, Shubhendu  and
      Sun, Jimeng",
    booktitle = "Proceedings of the 2024 Conference on Empirical Methods in Natural Language Processing",
    month = nov,
    year = "2024",
    address = "Miami, Florida, USA",
    url = "https://aclanthology.org/2024.emnlp-main.578/",
}

@inproceedings{duan2024,
    title = "Shifting Attention to Relevance: Towards the Predictive Uncertainty Quantification of Free-Form Large Language Models",
    author = "Duan, Jinhao  and
      Cheng, Hao  and
      Wang, Shiqi  and
      Zavalny, Alex  and
      Wang, Chenan  and
      Xu, Renjing  and
      Kailkhura, Bhavya  and
      Xu, Kaidi",
    booktitle = "Proceedings of the 62nd Annual Meeting of the Association for Computational Linguistics",
    month = aug,
    year = "2024",
    address = "Bangkok, Thailand",
    url = "https://aclanthology.org/2024.acl-long.276/",
}

@inproceedings{triviaqa,
    title = "{T}rivia{QA}: A Large Scale Distantly Supervised Challenge Dataset for Reading Comprehension",
    author = "Joshi, Mandar  and
      Choi, Eunsol  and
      Weld, Daniel  and
      Zettlemoyer, Luke",
    editor = "Barzilay, Regina  and
      Kan, Min-Yen",
    booktitle = "Proceedings of the 55th Annual Meeting of the Association for Computational Linguistics (Volume 1: Long Papers)",
    month = jul,
    year = "2017",
    address = "Vancouver, Canada",
    publisher = "Association for Computational Linguistics",
    url = "https://aclanthology.org/P17-1147/",
    doi = "10.18653/v1/P17-1147",
    pages = "1601--1611"
}

@inproceedings{
mmlu,
title={Measuring Massive Multitask Language Understanding},
author={Dan Hendrycks and Collin Burns and Steven Basart and Andy Zou and Mantas Mazeika and Dawn Song and Jacob Steinhardt},
booktitle={International Conference on Learning Representations},
year={2021},
url={https://openreview.net/forum?id=d7KBjmI3GmQ}
}

@inproceedings{survey-llm-nacl-2024,
    title = "A Survey of Confidence Estimation and Calibration in Large Language Models",
    author = "Geng, Jiahui  and
      Cai, Fengyu  and
      Wang, Yuxia  and
      Koeppl, Heinz  and
      Nakov, Preslav  and
      Gurevych, Iryna",
    editor = "Duh, Kevin  and
      Gomez, Helena  and
      Bethard, Steven",
    booktitle = "Proceedings of the 2024 Conference of the North American Chapter of the Association for Computational Linguistics: Human Language Technologies (Volume 1: Long Papers)",
    month = jun,
    year = "2024",
    address = "Mexico City, Mexico",
    publisher = "Association for Computational Linguistics",
    url = "https://aclanthology.org/2024.naacl-long.366/",
    doi = "10.18653/v1/2024.naacl-long.366",
    pages = "6577--6595"
}

@inproceedings{survey-llm-acl-2025,
    title = "A Survey of Uncertainty Estimation Methods on Large Language Models",
    author = "Xia, Zhiqiu  and
      Xu, Jinxuan  and
      Zhang, Yuqian  and
      Liu, Hang",
    editor = "Che, Wanxiang  and
      Nabende, Joyce  and
      Shutova, Ekaterina  and
      Pilehvar, Mohammad Taher",
    booktitle = "Findings of the Association for Computational Linguistics: ACL 2025",
    month = jul,
    year = "2025",
    address = "Vienna, Austria",
    publisher = "Association for Computational Linguistics",
    url = "https://aclanthology.org/2025.findings-acl.1101/",
    doi = "10.18653/v1/2025.findings-acl.1101",
    pages = "21381--21396",
    ISBN = "979-8-89176-256-5"
}

@inproceedings{survey-llm-kdd-2025,
author = {Liu, Xiaoou and Chen, Tiejin and Da, Longchao and Chen, Chacha and Lin, Zhen and Wei, Hua},
title = {Uncertainty Quantification and Confidence Calibration in Large Language Models: A Survey},
year = {2025},
isbn = {9798400714542},
publisher = {Association for Computing Machinery},
address = {New York, NY, USA},
url = {https://doi.org/10.1145/3711896.3736569},
doi = {10.1145/3711896.3736569},
booktitle = {Proceedings of the 31st ACM SIGKDD Conference on Knowledge Discovery and Data Mining V.2},
pages = {6107–6117},
numpages = {11},
location = {Toronto ON, Canada},
series = {KDD '25}
}

@inproceedings{yang-etal-2025-maqa,
    title = "{MAQA}: Evaluating Uncertainty Quantification in {LLM}s Regarding Data Uncertainty",
    author = "Yang, Yongjin  and
      Yoo, Haneul  and
      Lee, Hwaran",
    editor = "Chiruzzo, Luis  and
      Ritter, Alan  and
      Wang, Lu",
    booktitle = "Findings of the Association for Computational Linguistics: NAACL 2025",
    month = apr,
    year = "2025",
    address = "Albuquerque, New Mexico",
    publisher = "Association for Computational Linguistics",
    url = "https://aclanthology.org/2025.findings-naacl.325/",
    doi = "10.18653/v1/2025.findings-naacl.325",
    pages = "5861--5878",
    ISBN = "979-8-89176-195-7"
}

@article{
lin2024generating,
title={Generating with Confidence: Uncertainty Quantification for Black-box Large Language Models},
author={Zhen Lin and Shubhendu Trivedi and Jimeng Sun},
journal={Transactions on Machine Learning Research},
issn={2835-8856},
year={2024},
url={https://openreview.net/forum?id=DWkJCSxKU5},
note={}
}

@inproceedings{
xiong2024efficient,
title={Efficient and Effective Uncertainty Quantification for {LLM}s},
author={Miao Xiong and Andrea Santilli and Michael Kirchhof and Adam Golinski and Sinead Williamson},
booktitle={Neurips Safe Generative AI Workshop 2024},
year={2024},
url={https://openreview.net/forum?id=QKRLH57ATT}
}

@inproceedings{fadeeva-etal-2024-fact,
    title = "Fact-Checking the Output of Large Language Models via Token-Level Uncertainty Quantification",
    author = "Fadeeva, Ekaterina  and
      Rubashevskii, Aleksandr  and
      Shelmanov, Artem  and
      Petrakov, Sergey  and
      Li, Haonan  and
      Mubarak, Hamdy  and
      Tsymbalov, Evgenii  and
      Kuzmin, Gleb  and
      Panchenko, Alexander  and
      Baldwin, Timothy  and
      Nakov, Preslav  and
      Panov, Maxim",
    editor = "Ku, Lun-Wei  and
      Martins, Andre  and
      Srikumar, Vivek",
    booktitle = "Findings of the Association for Computational Linguistics: ACL 2024",
    month = aug,
    year = "2024",
    address = "Bangkok, Thailand",
    publisher = "Association for Computational Linguistics",
    url = "https://aclanthology.org/2024.findings-acl.558/",
    doi = "10.18653/v1/2024.findings-acl.558",
    pages = "9367--9385"
}

@article{stengel-eskin-van-durme-2023-calibrated,
    title = "Calibrated Interpretation: Confidence Estimation in Semantic Parsing",
    author = "Stengel-Eskin, Elias  and
      Van Durme, Benjamin",
    journal = "Transactions of the Association for Computational Linguistics",
    volume = "11",
    year = "2023",
    address = "Cambridge, MA",
    publisher = "MIT Press",
    url = "https://aclanthology.org/2023.tacl-1.69/",
    doi = "10.1162/tacl_a_00598",
    pages = "1213--1231"
}

@article{jiang-etal-2021-know,
    title = "How Can We Know When Language Models Know? On the Calibration of Language Models for Question Answering",
    author = "Jiang, Zhengbao  and
      Araki, Jun  and
      Ding, Haibo  and
      Neubig, Graham",
    editor = "Roark, Brian  and
      Nenkova, Ani",
    journal = "Transactions of the Association for Computational Linguistics",
    volume = "9",
    year = "2021",
    address = "Cambridge, MA",
    publisher = "MIT Press",
    url = "https://aclanthology.org/2021.tacl-1.57/",
    doi = "10.1162/tacl_a_00407",
    pages = "962--977"
}

@inproceedings{gao-etal-2024-spuq,
    title = "{SPUQ}: Perturbation-Based Uncertainty Quantification for Large Language Models",
    author = "Gao, Xiang  and
      Zhang, Jiaxin  and
      Mouatadid, Lalla  and
      Das, Kamalika",
    editor = "Graham, Yvette  and
      Purver, Matthew",
    booktitle = "Proceedings of the 18th Conference of the European Chapter of the Association for Computational Linguistics (Volume 1: Long Papers)",
    month = mar,
    year = "2024",
    address = "St. Julian{'}s, Malta",
    publisher = "Association for Computational Linguistics",
    url = "https://aclanthology.org/2024.eacl-long.143/",
    doi = "10.18653/v1/2024.eacl-long.143",
    pages = "2336--2346"
}

@inproceedings{
liu2024litcab,
title={LitCab: Lightweight Language Model Calibration over Short- and Long-form Responses},
author={Xin Liu and Muhammad Khalifa and Lu Wang},
booktitle={The Twelfth International Conference on Learning Representations},
year={2024},
url={https://openreview.net/forum?id=jH67LHVOIO}
}

@inproceedings{ulmer-etal-2024-calibrating,
    title = "Calibrating Large Language Models Using Their Generations Only",
    author = "Ulmer, Dennis  and
      Gubri, Martin  and
      Lee, Hwaran  and
      Yun, Sangdoo  and
      Oh, Seong",
    editor = "Ku, Lun-Wei  and
      Martins, Andre  and
      Srikumar, Vivek",
    booktitle = "Proceedings of the 62nd Annual Meeting of the Association for Computational Linguistics (Volume 1: Long Papers)",
    month = aug,
    year = "2024",
    address = "Bangkok, Thailand",
    publisher = "Association for Computational Linguistics",
    url = "https://aclanthology.org/2024.acl-long.824/",
    doi = "10.18653/v1/2024.acl-long.824",
    pages = "15440--15459"
}

@inproceedings{kapoor-etal-2024-calibration,
    title = "Calibration-Tuning: Teaching Large Language Models to Know What They Don{'}t Know",
    author = "Kapoor, Sanyam  and
      Gruver, Nate  and
      Roberts, Manley  and
      Pal, Arka  and
      Dooley, Samuel  and
      Goldblum, Micah  and
      Wilson, Andrew",
    editor = {V{\'a}zquez, Ra{\'u}l  and
      Celikkanat, Hande  and
      Ulmer, Dennis  and
      Tiedemann, J{\"o}rg  and
      Swayamdipta, Swabha  and
      Aziz, Wilker  and
      Plank, Barbara  and
      Baan, Joris  and
      de Marneffe, Marie-Catherine},
    booktitle = "Proceedings of the 1st Workshop on Uncertainty-Aware NLP (UncertaiNLP 2024)",
    month = mar,
    year = "2024",
    address = "St Julians, Malta",
    publisher = "Association for Computational Linguistics",
    url = "https://aclanthology.org/2024.uncertainlp-1.1/",
    doi = "10.18653/v1/2024.uncertainlp-1.1",
    pages = "1--14"
}

@article{mielke-etal-2022-reducing,
    title = "Reducing Conversational Agents' Overconfidence Through Linguistic Calibration",
    author = "Mielke, Sabrina J.  and
      Szlam, Arthur  and
      Dinan, Emily  and
      Boureau, Y-Lan",
    editor = "Roark, Brian  and
      Nenkova, Ani",
    journal = "Transactions of the Association for Computational Linguistics",
    volume = "10",
    year = "2022",
    address = "Cambridge, MA",
    publisher = "MIT Press",
    url = "https://aclanthology.org/2022.tacl-1.50/",
    doi = "10.1162/tacl_a_00494",
    pages = "857--872"
}

@inproceedings{10.5555/3692070.3692835,
author = {Hou, Bairu and Liu, Yujian and Qian, Kaizhi and Andreas, Jacob and Chang, Shiyu and Zhang, Yang},
title = {Decomposing uncertainty for large language models through input clarification ensembling},
year = {2024},
publisher = {JMLR.org},
booktitle = {Proceedings of the 41st International Conference on Machine Learning},
articleno = {765},
numpages = {20},
location = {Vienna, Austria},
series = {ICML'24}
}

@inproceedings{
wang2023evaluating,
title={Evaluating Open-{QA} Evaluation},
author={Cunxiang Wang and Sirui Cheng and Qipeng Guo and Yuanhao Yue and Bowen Ding and Zhikun Xu and Yidong Wang and Xiangkun Hu and Zheng Zhang and Yue Zhang},
booktitle={Thirty-seventh Conference on Neural Information Processing Systems Datasets and Benchmarks Track},
year={2023},
url={https://openreview.net/forum?id=UErNpveP6R}
}

@inproceedings{xie-etal-2024-calibrating,
    title = "Calibrating Language Models with Adaptive Temperature Scaling",
    author = "Xie, Johnathan  and
      Chen, Annie S  and
      Lee, Yoonho  and
      Mitchell, Eric  and
      Finn, Chelsea",
    editor = "Al-Onaizan, Yaser  and
      Bansal, Mohit  and
      Chen, Yun-Nung",
    booktitle = "Proceedings of the 2024 Conference on Empirical Methods in Natural Language Processing",
    month = nov,
    year = "2024",
    address = "Miami, Florida, USA",
    publisher = "Association for Computational Linguistics",
    url = "https://aclanthology.org/2024.emnlp-main.1007/",
    doi = "10.18653/v1/2024.emnlp-main.1007",
    pages = "18128--18138"
}

@misc{qwen35blog,
    title = {Qwen3.5: Accelerating Productivity with Native Multimodal Agents},
    url = {https://qwen.ai/blog?id=qwen3.5},
    author = {Qwen Team},
    month = {February},
    year = {2026}
}

@misc{vteam2025glm45vglm41vthinkingversatilemultimodal,
      title={GLM-4.5V and GLM-4.1V-Thinking: Towards Versatile Multimodal Reasoning with Scalable Reinforcement Learning}, 
      author={V Team and Wenyi Hong and Wenmeng Yu and Xiaotao Gu and Guo Wang and Guobing Gan and Haomiao Tang and Jiale Cheng and Ji Qi and Junhui Ji and Lihang Pan and Shuaiqi Duan and Weihan Wang and Yan Wang and Yean Cheng and Zehai He and Zhe Su and Zhen Yang and Ziyang Pan and Aohan Zeng and Baoxu Wang and Bin Chen and Boyan Shi and Changyu Pang and Chenhui Zhang and Da Yin and Fan Yang and Guoqing Chen and Jiazheng Xu and Jiale Zhu and Jiali Chen and Jing Chen and Jinhao Chen and Jinghao Lin and Jinjiang Wang and Junjie Chen and Leqi Lei and Letian Gong and Leyi Pan and Mingdao Liu and Mingde Xu and Mingzhi Zhang and Qinkai Zheng and Sheng Yang and Shi Zhong and Shiyu Huang and Shuyuan Zhao and Siyan Xue and Shangqin Tu and Shengbiao Meng and Tianshu Zhang and Tianwei Luo and Tianxiang Hao and Tianyu Tong and Wenkai Li and Wei Jia and Xiao Liu and Xiaohan Zhang and Xin Lyu and Xinyue Fan and Xuancheng Huang and Yanling Wang and Yadong Xue and Yanfeng Wang and Yanzi Wang and Yifan An and Yifan Du and Yiming Shi and Yiheng Huang and Yilin Niu and Yuan Wang and Yuanchang Yue and Yuchen Li and Yutao Zhang and Yuting Wang and Yu Wang and Yuxuan Zhang and Zhao Xue and Zhenyu Hou and Zhengxiao Du and Zihan Wang and Peng Zhang and Debing Liu and Bin Xu and Juanzi Li and Minlie Huang and Yuxiao Dong and Jie Tang},
      year={2025},
      eprint={2507.01006},
      archivePrefix={arXiv},
      primaryClass={cs.CV},
      url={https://arxiv.org/abs/2507.01006}, 
}

@article{Liu2026Ministral3,
  title={Ministral 3},
  author={Alexander Liu and Kartik Khandelwal and Sandeep Subramanian and Victor Jouault and Abhinav Rastogi and Adrien Sad'e and Alan Jeffares and Albert Q. Jiang and Alexandre Cahill and Alexandre Gavaudan and Alexandre Sablayrolles and Am'elie H'eliou and Amos You and Andy Ehrenberg and Andy Desman Lo and Anton Eliseev and Antonia Calvi and Avinash Sooriyarachchi and Baptiste Bout and Baptiste Rozi{\`e}re and Baudouin De Monicault and Cl{\'e}mence Lanfranchi and Corentin Barreau and Cyprien Courtot and Daniele Grattarola and Darius Dabert and Diego de Las Casas and Elliot Chane-Sane and Faruk Ahmed and Gabrielle Berrada and Gaetan Ecrepont and Gauthier Guinet and Georgii Sergeevich Novikov and Guillaume Kunsch and Guillaume Lample and Guillaume Martin and Gunshi Gupta and Jan Ludziejewski and Jason Rute and Joachim Studnia and Jonas Amar and Jos{\'e}phine Delas and Josselin Somerville Roberts and Karmesh Yadav and Khyathi Raghavi Chandu and Kush Jain and Laurence Aitchison and Laurent Fainsin and L{\'e}onard Blier and Lingxiao Zhao and Louis Martin and Lucile Saulnier and Luyu Gao and Maarten Buyl and Margaret Jennings and Marie Pellat and Mark Prins and Mathieu Poir'ee and Mathilde Guillaumin and Matthieu Dinot and Matthieu Futeral and Maxime Darrin and Maximilian Augustin and Mia Chiquier and Michel Schimpf and Nathan Grinsztajn and Neha Gupta and Nikhil Raghuraman and Olivier Bousquet and Olivier Duchenne and Patricia Wang and Patrick von Platen and Paul Jacob and Paul Wambergue and Paula Kurylowicz and Pavankumar Reddy Muddireddy and Philom{\`e}ne Chagniot and Pierre Stock and Pravesh Agrawal and Quentin Torroba and Romain Sauvestre and Roman Soletskyi and Rupert Menneer and Sagar Vaze and Samuel Barry and Sanchit Gandhi and Siddhant Waghjale and Siddharth Gandhi and Soham Ghosh and Srijan Mishra and Sumukh Aithal and Szymon Antoniak and Teven Le Scao and Th{\'e}o Cachet and Theo Simon Sorg and Thibaut Lavril and Thiziri Nait Saada and Thomas Chabal and Thomas Foubert and Thomas Robert and Thomas Wang and Tim Lawson and Tom Bewley and Tom Edwards and Umar Jamil and Umberto M. Tomasini and Valeriia Nemychnikova and Van Phung and Vincent Maladiere and Virgile Richard and Wassim Bouaziz and Wen-Ding Li and William Marshall and Xinghui Li and Xinyu Yang and Yassine El Ouahidi and Yihan Wang and Yunhao Tang and Zaccharie Ramzi},
  journal={ArXiv},
  year={2026},
  volume={abs/2601.08584},
  url={https://api.semanticscholar.org/CorpusID:284704499}
}

@article{comanici2025gemini,
  title={Gemini 2.5: Pushing the frontier with advanced reasoning, multimodality, long context, and next generation agentic capabilities},
  author={Comanici, Gheorghe and Bieber, Eric and Schaekermann, Mike and Pasupat, Ice and Sachdeva, Noveen and Dhillon, Inderjit and Blistein, Marcel and Ram, Ori and Zhang, Dan and Rosen, Evan and others},
  journal={arXiv preprint arXiv:2507.06261},
  year={2025}
}

@inproceedings{NEURIPS2018_285e19f2,
 author = {Heo, Jay and Lee, Hae Beom and Kim, Saehoon and Lee, Juho and Kim, Kwang Joon and Yang, Eunho and Hwang, Sung Ju},
 booktitle = {Advances in Neural Information Processing Systems},
 editor = {S. Bengio and H. Wallach and H. Larochelle and K. Grauman and N. Cesa-Bianchi and R. Garnett},
 pages = {},
 publisher = {Curran Associates, Inc.},
 title = {Uncertainty-Aware Attention for Reliable Interpretation and Prediction},
 url = {https://proceedings.neurips.cc/paper_files/paper/2018/file/285e19f20beded7d215102b49d5c09a0-Paper.pdf},
 volume = {31},
 year = {2018}
}

@article{kumar2019calibration,
  title={Calibration of encoder decoder models for neural machine translation},
  author={Kumar, Aviral and Sarawagi, Sunita},
  journal={arXiv preprint arXiv:1903.00802},
  year={2019}
}

@article{Farquhar2024,
author = {Farquhar, Sebastian and Kossen, Jannik and Kuhn, Lorenz and Gal, Yarin},
year = {2024},
month = {06},
pages = {625-630},
title = {Detecting hallucinations in large language models using semantic entropy},
volume = {630},
journal = {Nature},
doi = {10.1038/s41586-024-07421-0}
}

@phdthesis{brummer_measuring_2010,
	title = {Measuring, reﬁning and calibrating speaker and language information extracted from speech},
	url = {https://scholar.sun.ac.za/items/1b46805b-2b1e-46aa-83ce-75ede92f0159},
	language = {en},
	school = {University of Stellenbosch},
	author = {Brummer, Niko},
	year = {2010},
}

@ARTICLE{macedo_entropy,
  author={Macêdo, David and Ren, Tsang Ing and Zanchettin, Cleber and Oliveira, Adriano L. I. and Ludermir, Teresa},
  journal={IEEE Transactions on Neural Networks and Learning Systems}, 
  title={Entropic Out-of-Distribution Detection: Seamless Detection of Unknown Examples}, 
  year={2022},
  volume={33},
  number={6},
  pages={2350-2364},
  doi={10.1109/TNNLS.2021.3112897}}

@article{surveyuq_he,
author = {He, Wenchong and Jiang, Zhe and Xiao, Tingsong and Xu, Zelin and Li, Yukun},
title = {A Survey on Uncertainty Quantification Methods for Deep Learning},
year = {2026},
issue_date = {May 2026},
publisher = {Association for Computing Machinery},
address = {New York, NY, USA},
volume = {58},
number = {7},
issn = {0360-0300},
url = {https://doi.org/10.1145/3786319},
doi = {10.1145/3786319},
journal = {ACM Comput. Surv.},
month = feb,
articleno = {179},
numpages = {35}
}

@inproceedings{
malinin2021uncertainty,
title={Uncertainty Estimation in Autoregressive Structured Prediction},
author={Andrey Malinin and Mark Gales},
booktitle={International Conference on Learning Representations},
year={2021},
url={https://openreview.net/forum?id=jN5y-zb5Q7m}
}

@inproceedings{NIPS2017_9ef2ed4b,
 author = {Lakshminarayanan, Balaji and Pritzel, Alexander and Blundell, Charles},
 booktitle = {Advances in Neural Information Processing Systems},
 editor = {I. Guyon and U. Von Luxburg and S. Bengio and H. Wallach and R. Fergus and S. Vishwanathan and R. Garnett},
 pages = {},
 publisher = {Curran Associates, Inc.},
 title = {Simple and Scalable Predictive Uncertainty Estimation using Deep Ensembles},
 url = {https://proceedings.neurips.cc/paper_files/paper/2017/file/9ef2ed4b7fd2c810847ffa5fa85bce38-Paper.pdf},
 volume = {30},
 year = {2017}
}

@article{IEMOCAP,
author = {Busso, Carlos and Bulut, Murtaza and Lee, Chi-Chun and Kazemzadeh, Abe and Mower Provost, Emily and Kim, Samuel and Chang, Jeannette and Lee, Sungbok and Narayanan, Shrikanth},
year = {2008},
month = {12},
pages = {335-359},
title = {IEMOCAP: Interactive emotional dyadic motion capture database},
volume = {42},
journal = {Language Resources and Evaluation},
doi = {10.1007/s10579-008-9076-6}
}

@Techreport{Krizhevsky2009LearningML,
 author = {Krizhevsky, Alex and Hinton, Geoffrey},
 address = {Toronto, Ontario},
 institution = {University of Toronto},
 number = {0},
 publisher = {Technical report, University of Toronto},
 title = {Learning multiple layers of features from tiny images},
 year = {2009},
 title_with_no_special_chars = {Learning multiple layers of features from tiny images},
 url = {https://www.cs.toronto.edu/~kriz/learning-features-2009-TR.pdf}
}

@inproceedings{geifman_selective_2017,
	title = {Selective {Classification} for {Deep} {Neural} {Networks}},
	volume = {30},
	url = {https://papers.nips.cc/paper_files/paper/2017/hash/4a8423d5e91fda00bb7e46540e2b0cf1-Abstract.html},
	urldate = {2026-04-24},
	booktitle = {Advances in {Neural} {Information} {Processing} {Systems}},
	publisher = {Curran Associates, Inc.},
	author = {Geifman, Yonatan and El-Yaniv, Ran},
	year = {2017},
}

@inproceedings{
geifman2018biasreduced,
title={Bias-Reduced Uncertainty Estimation for Deep Neural Classifiers},
author={Yonatan Geifman and Guy Uziel and Ran El-Yaniv},
booktitle={International Conference on Learning Representations},
year={2019},
url={https://openreview.net/forum?id=SJfb5jCqKm},
}

@inproceedings{zhou_novel_2025,
	title = {A {Novel} {Characterization} of the {Population} {Area} {Under} the {Risk} {Coverage} {Curve} ({AURC}) and {Rates} of {Finite} {Sample} {Estimators}},
	url = {https://openreview.net/forum?id=LBBUJkqkOM},
	language = {en},
	urldate = {2026-04-10},
	author = {Zhou, Han and Landeghem, Jordy Van and Popordanoska, Teodora and Blaschko, Matthew B.},
	month = jun,
	year = {2025},
}

@inproceedings{traub_overcoming_2024,
	title = {Overcoming {Common} {Flaws} in the {Evaluation} of {Selective} {Classification} {Systems}},
	url = {https://openreview.net/forum?id=2TktDpGqNM},
	language = {en},
	urldate = {2026-04-23},
	author = {Traub, Jeremias and Bungert, Till J. and Lüth, Carsten T. and Baumgartner, Michael and Maier-Hein, Klaus and Maier-hein, Lena and Jaeger, Paul F.},
	month = nov,
	year = {2024},
}

@article{franc_optimal_2023,
	title = {Optimal {Strategies} for {Reject} {Option} {Classifiers}},
	volume = {24},
	issn = {1533-7928},
	url = {http://jmlr.org/papers/v24/21-0048.html},
	number = {11},
	urldate = {2026-01-15},
	journal = {Journal of Machine Learning Research},
	author = {Franc, Vojtech and Prusa, Daniel and Voracek, Vaclav},
	year = {2023},
	pages = {1--49},
}

@inproceedings{hendrycks_baseline_2017,
	title = {A {Baseline} for {Detecting} {Misclassified} and {Out}-of-{Distribution} {Examples} in {Neural} {Networks}},
	url = {https://openreview.net/forum?id=Hkg4TI9xl},
	language = {en},
	urldate = {2026-03-12},
	author = {Hendrycks, Dan and Gimpel, Kevin},
	month = feb,
	year = {2017},
}

@article{chow_optimum_1957,
	title = {An optimum character recognition system using decision functions},
	volume = {EC-6},
	issn = {0367-9950},
	url = {https://ieeexplore.ieee.org/document/5222035},
	doi = {10.1109/TEC.1957.5222035},
	number = {4},
	urldate = {2026-03-12},
	journal = {IRE Transactions on Electronic Computers},
	author = {Chow, C. K.},
	month = dec,
	year = {1957},
	pages = {247--254},
}

@article{el-yaniv_foundations_2010,
	title = {On the {Foundations} of {Noise}-free {Selective} {Classification}},
	volume = {11},
	issn = {1533-7928},
	url = {http://jmlr.org/papers/v11/el-yaniv10a.html},
	number = {53},
	urldate = {2026-03-11},
	journal = {Journal of Machine Learning Research},
	author = {El-Yaniv, Ran and Wiener, Yair},
	year = {2010},
	pages = {1605--1641},
}

@inproceedings{savage1961foundations,
  title={The foundations of statistics reconsidered},
  author={Savage, Leonard J},
  booktitle={Proceedings of the Fourth Berkeley Symposium on Mathematical Statistics and Probability, Volume 1: Contributions to the Theory of Statistics},
  volume={4},
  pages={575--587},
  year={1961},
  organization={University of California Press}
}

@article{gneiting_strictly_2007,
	title = {Strictly {Proper} {Scoring} {Rules}, {Prediction}, and {Estimation}},
	volume = {102},
	issn = {0162-1459, 1537-274X},
	url = {http://www.tandfonline.com/doi/abs/10.1198/016214506000001437},
	doi = {10.1198/016214506000001437},
	language = {en},
	number = {477},
	urldate = {2024-12-30},
	journal = {Journal of the American Statistical Association},
	author = {Gneiting, Tilmann and Raftery, Adrian E},
	month = mar,
	year = {2007},
	pages = {359--378},
}

@inproceedings{
geifman2018bias,
title={Bias-Reduced Uncertainty Estimation for Deep Neural Classifiers},
author={Yonatan Geifman and Guy Uziel and Ran El-Yaniv},
booktitle={International Conference on Learning Representations},
year={2019},
url={https://openreview.net/forum?id=SJfb5jCqKm},
}

@inproceedings{jaeger2022call,
  title={A Call to Reflect on Evaluation Practices for Failure Detection in Image Classification},
  author={J{\"a}ger, Paul F and L{\"u}th, Carsten and Klein, Lukas and Bungert, Till},
  booktitle={ICLR 2023},
  year={2023}
}

@inproceedings{bungert2023understanding,
  title={Understanding silent failures in medical image classification},
  author={Bungert, Till J and Kobelke, Levin and Jaeger, Paul F},
  booktitle={International Conference on Medical Image Computing and Computer-Assisted Intervention},
  pages={400--410},
  year={2023},
  organization={Springer}
}

@article{cheng2023unified,
  title={Unified Classification and Rejection: A One-versus-All Framework},
  author={Cheng, Zhen and Zhang, Xu-Yao and Liu, Cheng-Lin},
  journal={arXiv preprint arXiv:2311.13355},
  year={2023}
}

@inproceedings{zhu2023openmix,
  title={Openmix: Exploring outlier samples for misclassification detection},
  author={Zhu, Fei and Cheng, Zhen and Zhang, Xu-Yao and Liu, Cheng-Lin},
  booktitle={Proceedings of the IEEE/CVF Conference on Computer Vision and Pattern Recognition},
  pages={12074--12083},
  year={2023}
}

@article{varshney2020towards,
  title={Towards improving selective prediction ability of nlp systems},
  author={Varshney, Neeraj and Mishra, Swaroop and Baral, Chitta},
  journal={arXiv preprint arXiv:2008.09371},
  year={2020}
}

@INPROCEEDINGS{naushad2024super,
  author={Naushad, Junayed and Voiculescu, Irina},
  booktitle={2024 IEEE International Symposium on Biomedical Imaging (ISBI)}, 
  title={Super-Trustscore: Reliable Failure Detection for Automated Skin Lesion Diagnosis}, 
  year={2024},
  volume={},
  number={},
  pages={1-4},
  doi={10.1109/ISBI56570.2024.10635815}}

@inproceedings{van2024beyond,
  title={Beyond document page classification: Design, datasets, and challenges},
  author={Van Landeghem, Jordy and Biswas, Sanket and Blaschko, Matthew and Moens, Marie-Francine},
  booktitle={Proceedings of the IEEE/CVF Winter Conference on Applications of Computer Vision},
  pages={2962--2972},
  year={2024}
}

@inproceedings{van2023document,
  title={Document understanding dataset and evaluation (DUDE)},
  author={Van Landeghem, Jordy and Tito, Rub{\`e}n and Borchmann, {\L}ukasz and Pietruszka, Micha{\l} and Joziak, Pawel and Powalski, Rafal and Jurkiewicz, Dawid and Coustaty, Micka{\"e}l and Anckaert, Bertrand and Valveny, Ernest and others},
  booktitle={Proceedings of the IEEE/CVF International Conference on Computer Vision},
  pages={19528--19540},
  year={2023}
}

@inproceedings{zhu2022rethinking,
  title={Rethinking confidence calibration for failure prediction},
  author={Zhu, Fei and Cheng, Zhen and Zhang, Xu-Yao and Liu, Cheng-Lin},
  booktitle={European Conference on Computer Vision},
  pages={518--536},
  year={2022},
  organization={Springer}
}

@inproceedings{xin2021art,
  title={The art of abstention: Selective prediction and error regularization for natural language processing},
  author={Xin, Ji and Tang, Raphael and Yu, Yaoliang and Lin, Jimmy},
  booktitle={Proceedings of the 59th Annual Meeting of the Association for Computational Linguistics and the 11th International Joint Conference on Natural Language Processing (Volume 1: Long Papers)},
  pages={1040--1051},
  year={2021}
}

@inproceedings{yoshikawa2023selective,
  title={Selective-LAMA: Selective prediction for confidence-aware evaluation of language models},
  author={Yoshikawa, Hiyori and Okazaki, Naoaki},
  booktitle={Findings of the Association for Computational Linguistics: EACL 2023},
  pages={2017--2028},
  year={2023}
}

@inproceedings{ding2020revisiting,
  title={Revisiting the evaluation of uncertainty estimation and its application to explore model complexity-uncertainty trade-off},
  author={Ding, Yukun and Liu, Jinglan and Xiong, Jinjun and Shi, Yiyu},
  booktitle={Proceedings of the IEEE/CVF Conference on Computer Vision and Pattern Recognition Workshops},
  pages={4--5},
  year={2020}
}

@article{zhu2023revisiting,
  title={Revisiting Confidence Estimation: Towards Reliable Failure Prediction},
  author={Zhu, Fei and Zhang, Xu-Yao and Cheng, Zhen and Liu, Cheng-Lin},
  journal={IEEE Transactions on Pattern Analysis and Machine Intelligence},
  year={2023},
  publisher={IEEE}
}

@article{galil2021disrupting,
  title={Disrupting deep uncertainty estimation without harming accuracy},
  author={Galil, Ido and El-Yaniv, Ran},
  journal={Advances in Neural Information Processing Systems},
  volume={34},
  pages={21285--21296},
  year={2021}
}

@article{franc2023optimal,
  title={Optimal strategies for reject option classifiers},
  author={Franc, Vojtech and Prusa, Daniel and Voracek, Vaclav},
  journal={Journal of Machine Learning Research},
  volume={24},
  number={11},
  pages={1--49},
  year={2023}
}

@article{cen2023devil,
  title={The devil is in the wrongly-classified samples: Towards unified open-set recognition},
  author={Cen, Jun and Luan, Di and Zhang, Shiwei and Pei, Yixuan and Zhang, Yingya and Zhao, Deli and Shen, Shaojie and Chen, Qifeng},
  journal={arXiv preprint arXiv:2302.04002},
  year={2023}
}

@inproceedings{xia2022augmenting,
  title={Augmenting softmax information for selective classification with out-of-distribution data},
  author={Xia, Guoxuan and Bouganis, Christos-Savvas},
  booktitle={Proceedings of the Asian Conference on Computer Vision},
  pages={1995--2012},
  year={2022}
}

@inproceedings{
cattelan2023fix,
title={How to Fix a Broken Confidence Estimator: Evaluating Post-hoc Methods for Selective Classification with Deep Neural Networks},
author={Lu{\'\i}s Felipe Prates Cattelan and Danilo Silva},
booktitle={The 40th Conference on Uncertainty in Artificial Intelligence},
year={2024},
url={https://openreview.net/forum?id=IJBWLRCvYX}
}

@inproceedings{
tran2022plex,
title={Plex: Towards Reliability using Pretrained Large Model Extensions},
author={Dustin Tran and Jeremiah Zhe Liu and Michael W Dusenberry and Du Phan and Mark Collier and Jie Ren and Kehang Han and Zi Wang and Zelda E Mariet and Huiyi Hu and Neil Band and Tim G. J. Rudner and Zachary Nado and Joost van Amersfoort and Andreas Kirsch and Rodolphe Jenatton and Nithum Thain and E. Kelly Buchanan and Kevin Patrick Murphy and D. Sculley and Yarin Gal and Zoubin Ghahramani and Jasper Snoek and Balaji Lakshminarayanan},
booktitle={First Workshop on Pre-training: Perspectives, Pitfalls, and Paths Forward at ICML 2022},
year={2022},
url={https://openreview.net/forum?id=6x0gB9gOHFg}
}

@article{kim2023unified,
  title={A unified benchmark for the unknown detection capability of deep neural networks},
  author={Kim, Jihyo and Koo, Jiin and Hwang, Sangheum},
  journal={Expert Systems with Applications},
  volume={229},
  pages={120461},
  year={2023},
  publisher={Elsevier}
}

@inproceedings{
ashukha2020pitfalls,
title={Pitfalls of In-Domain Uncertainty Estimation and Ensembling in Deep Learning},
author={Arsenii Ashukha and Alexander Lyzhov and Dmitry Molchanov and Dmitry Vetrov},
booktitle={International Conference on Learning Representations},
year={2020},
url={https://openreview.net/forum?id=BJxI5gHKDr}
}

@inproceedings{
xia2024understanding,
title={Towards Understanding Why Label Smoothing Degrades Selective Classification and How to Fix It},
author={Guoxuan Xia and Olivier Laurent and Gianni Franchi and Christos-Savvas Bouganis},
booktitle={The Thirteenth International Conference on Learning Representations},
year={2025},
url={https://openreview.net/forum?id=6oWFn6fY4A}
}

@article{good1952,
    author = {Good, I. J.},
    title = {Rational Decisions},
    journal = {Journal of the Royal Statistical Society: Series B (Methodological)},
    volume = {14},
    number = {1},
    pages = {107-114},
    year = {1952},
    month = {01},
    issn = {0035-9246},
    doi = {10.1111/j.2517-6161.1952.tb00104.x},
    url = {https://doi.org/10.1111/j.2517-6161.1952.tb00104.x},
    eprint = {https://academic.oup.com/jrsssb/article-pdf/14/1/107/49093824/jrsssb_14_1_107.pdf},
}

@book{Peterson_2017, place={Cambridge}, edition={2}, series={Cambridge Introductions to Philosophy}, title={An Introduction to Decision Theory}, publisher={Cambridge University Press}, author={Peterson, Martin}, year={2017}, collection={Cambridge Introductions to Philosophy}}

@book{russell2016artificial,
  title={Artificial Intelligence: A Modern Approach},
  author={Russell, S. and Norvig, P.},
  isbn={9781292153964},
  series={Always learning},
  url={https://books.google.com.ar/books?id=XS9CjwEACAAJ},
  year={2016},
  publisher={Pearson}
}

@misc{dyrland2023doesevaluationstandevaluation,
      title={Does the evaluation stand up to evaluation? A first-principle approach to the evaluation of classifiers}, 
      author={K. Dyrland and A. S. Lundervold and P. G. L. Porta Mana},
      year={2023},
      eprint={2302.12006},
      archivePrefix={arXiv},
      primaryClass={cs.LG},
      url={https://arxiv.org/abs/2302.12006}, 
}

@article{
ferrer2025no,
title={No Need for Ad-hoc Substitutes: The Expected Cost is a Principled All-purpose Classification Metric},
author={Luciana Ferrer},
journal={Transactions on Machine Learning Research},
issn={2835-8856},
year={2025},
url={https://openreview.net/forum?id=5PPbvCExZs},
note={}
}

@article{
ferrer2025evaluating,
title={Evaluating Posterior Probabilities:  Decision Theory, Proper Scoring Rules, and Calibration},
author={Luciana Ferrer and Daniel Ramos},
journal={Transactions on Machine Learning Research},
issn={2835-8856},
year={2025},
url={https://openreview.net/forum?id=qbrE0LR7fF},
note={}
}

@article{hendrickson1971proper,
  title={Proper scores for probability forecasters},
  author={Hendrickson, Arlo D and Buehler, Robert J},
  journal={The Annals of Mathematical Statistics},
  pages={1916--1921},
  year={1971},
  publisher={JSTOR}
}

\appendix

\section{Mathematical proofs}\label{app:math}

In this section we provide the proofs for various propositions made throughout the paper. For all of them, we will rely on the following definition of proper scoring rule (PSR):
\begin{definition}
    Let $\Yspace\triangleq \{y_1,\ldots,y_K\}$ be the set of possible classes and $\Simp_K$ the $K-1$ dimensional simplex. A proper scoring-rule is a function $C^*:\Yspace \times \Simp_K \rightarrow \R_{\geq 0}$ such that, for every $\qvec,\pvec \in \Simp_K$ the following property holds:
    \begin{align*}
        \E_{y\sim \pvec}[C^*(y,\pvec)] \leq \E_{y\sim \pvec}[C^*(y,\qvec)]
    \end{align*}
\end{definition}

\subsection{Bayes decisions for $C_\gamma$ and construction of $C^*_\gamma$}
\label{app:bayes_decisions_for_c_gamma}

We first show that a PSR can be constructed by evaluating a cost function in the Bayes decisor. This is a known fact \cite{DawidMusio2014,brummer_measuring_2010}, but we include the proposition and proof here for completeness:
\begin{proposition}\label{prop:psr_construction}
    Let $C:\Yspace\times \Dspace \rightarrow \R_{\geq 0}$ be a cost function defined for every class-decision pair $(y,d)\in\Yspace \times \Dspace$ for a discrete class set $\Yspace$ and a decision set $\Dspace$. Let $d_B: \Simp_K \rightarrow \Dspace$ such that $d_B(\qvec) \in \argmin_{d\in\Dspace} \E_{y\sim \qvec}[C(y,d)]$ for every $\qvec \in \Simp_K$ be the Bayes decision function. Then, for every $\qvec,\pvec \in \Simp_K$ it holds that
    \begin{align*}
        \E_{y\sim\pvec}[C(y,d_B(\pvec))] \leq \E_{y\sim\pvec}[C(y,d_B(\qvec))]
    \end{align*}
    and therefore, the function $C^*:\Yspace \times \Simp_K \rightarrow \R_{\geq 0}$ defined as $C^*(y,\qvec)\triangleq C^*(y,d_B(\qvec))$ is a proper scoring-rule.
\end{proposition}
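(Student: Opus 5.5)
The argument is a one-line consequence of the definition of the Bayes decision, so the plan is to unfold that definition and then read off properness. Fix arbitrary $\qvec,\pvec\in\Simp_K$. Since $d_B(\qvec)\in\Dspace$ is just some element of the decision set, it is an admissible competitor in the minimization that defines $d_B(\pvec)$. By hypothesis $d_B(\pvec)\in\argmin_{d\in\Dspace}\E_{y\sim\pvec}[C(y,d)]$, so
\begin{align*}
\E_{y\sim\pvec}[C(y,d_B(\pvec))] = \min_{d\in\Dspace}\E_{y\sim\pvec}[C(y,d)] \leq \E_{y\sim\pvec}[C(y,d_B(\qvec))],
\end{align*}
which is the first claimed inequality. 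Because $\Yspace$ is finite, each expectation is the finite sum $\sum_{k=1}^K p_k\, C(y_k,d)$. This sum is well defined as long as $C$ takes values in $\R_{\geq 0}$, so no measurability or integrability issues arise.

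For the second part, I would read the definition in the statement as $C^*(y,\qvec)\triangleq C(y,d_B(\qvec))$; the $C^*$ on the right-hand side is evidently a typo for $C$. With that reading, the displayed inequality says exactly $\E_{y\sim\pvec}[C^*(y,\pvec)]\leq\E_{y\sim\pvec}[C^*(y,\qvec)]$ for all $\qvec,\pvec$. This is the defining property of a PSR in the Definition above. It remains to check that $C^*$ maps into $\R_{\geq 0}$, which holds because $C$ does.

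The only point needing care is well-definedness of $d_B$, i.e.\ that the argmin is nonempty. The proposition simply posits a function $d_B$ selecting a minimizer, so I would note that existence is an assumption; it is automatic when $\Dspace$ is finite, or when the map $d\mapsto\E_{y\sim\qvec}[C(y,d)]$ is lower semicontinuous on a compact $\Dspace$. I would also note that ties are harmless: any selection from the argmin attains the same minimal expected cost, so the inequality holds regardless of which minimizer is chosen. There is no real obstacle beyond this.
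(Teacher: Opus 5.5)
Your proposal is correct and matches the paper's proof: $d_B(\qvec)$ is one admissible competitor in the minimization that defines $d_B(\pvec)$, so the inequality follows immediately, and it is exactly the defining property of a PSR. Two points you make are sound additions the paper leaves implicit: reading the definition as $C^*(y,\qvec)\triangleq C(y,d_B(\qvec))$ to fix the typo, and noting that nonemptiness of the argmin is an assumption while the choice among tied minimizers is harmless.
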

\begin{proof}
    Given a $\pvec\in\Simp_K$:
    \begin{align*}
        \E_{y\sim\pvec}[C(y,d_B(\pvec))] \leq \E_{y\sim\pvec}[C(y,d)], \forall d \in \Dspace
    \end{align*}
since, by definition, $d_B(\pvec)$ minimizes $\E_{y\sim \pvec}[C(y,d)]$ for every $d\in\Dspace$. In particular, the decision $d_B(\qvec)$, for $\qvec \in \Simp_K$, should also be such that $\E_{y\sim\pvec}[C(y,d_B(\pvec))] \leq \E_{y\sim\pvec}[C(y,d_B(\qvec))]$, which is the defining property of a proper scoring-rule.
\end{proof}

Now that we know that a PSR $C^*(y,\qvec)$ can be constructed from a cost $C$ and the corresponding Bayes decision function $d_B$, we can obtain a PSR for the cost in \eqref{eq:reject_cost}.
\begin{proposition}\label{thrm:ps_gamma_fixed}
    Let $\Yspace=\{y_1,\ldots,y_K\}$, $\tilde \Dspace$ be a set of decisions, and $\Dspace = \tilde \Dspace \cup \{d_r\}$ the set of decisions that include the rejection option $d_r$. Let also $\tilde C : \Yspace \times \tilde \Dspace \rightarrow \R_{\geq 0}$ be a cost function and $\tilde d_B: \Simp_K \rightarrow \tilde \Dspace$ such that $\tilde{d}_B(\qvec) \in \argmin_{d \in \tilde{\Dspace}} \sum_{k=1}^K q_k \tilde{C}(y_k,d)$ and $u_{\tilde C}: \Simp_K \rightarrow \R_{\geq 0}$ such that $u_{\tilde C}(\qvec)\triangleq \sum_{k=1}^K q_k \tilde C^*(y_k,\qvec)$ with $\tilde C^*(y_k,\qvec) \triangleq \tilde C(y_k,\tilde d_B(\qvec))$ for every $\qvec\in\Simp_K$. Then, the Bayes decision function $d_B:\Simp_K \rightarrow \Dspace$ for the cost $C_\gamma:\Yspace \times \Dspace \rightarrow \R_{\geq 0}$ defined in \eqref{eq:reject_cost}, repeated here for ease of reading:
\begin{align}
    C_{\gamma}(y,d) = 
    \begin{cases}
        \tilde C(y,d) & \mathrm{if}\ \ d \in \tilde \Dspace \\
        \gamma & \mathrm{if}\ \ d = d_r
    \end{cases}
\end{align}
    
    is given by
\begin{align} \label{eqapp:dB}
    d_B(\qvec) =
    \begin{cases}
        \tilde d_B(\qvec) & \mathrm{if}\ \ u_{\tilde C}(\qvec) \leq \gamma \\
        d_r & \mathrm{if}\ \ u_{\tilde C}(\qvec) > \gamma 
    \end{cases}
\end{align}
and the corresponding PSR associated with this cost is $C^*:\Yspace \times \Simp_K \rightarrow \R_{\geq 0}$ such that
\begin{align}\label{eqapp:Cgammapsr}
C_\gamma^*(y,\qvec)\triangleq C_\gamma(y, d_B(\qvec)) = 
    \begin{cases}
        \tilde C^*(y,\qvec) & \mathrm{if}\ \ u_{\tilde C}(\qvec) \leq \gamma \\
        \gamma & \mathrm{if}\ \ u_{\tilde C}(\qvec) > \gamma 
    \end{cases}
\end{align}
\end{proposition}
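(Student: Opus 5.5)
The plan is to compute the Bayes decision for $C_\gamma$ by splitting the minimization over $\Dspace = \tilde \Dspace \cup \{d_r\}$ into the two pieces of the union. We then read off $C^*_\gamma$ from Proposition~\ref{prop:psr_construction}.

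Fix $\qvec\in\Simp_K$. For a candidate $d\in\tilde\Dspace$ the expected cost is $\E_{y\sim\qvec}[C_\gamma(y,d)] = \sum_{k} q_k \tilde C(y_k,d)$. By the defining property of $\tilde d_B$, its minimum over $\tilde\Dspace$ is attained at $\tilde d_B(\qvec)$, and the minimal value is exactly $\sum_k q_k \tilde C(y_k,\tilde d_B(\qvec)) = \sum_k q_k \tilde C^*(y_k,\qvec) = u_{\tilde C}(\qvec)$. For the reject option, $C_\gamma(y,d_r)=\gamma$ for every $y$, so $\E_{y\sim\qvec}[C_\gamma(y,d_r)] = \gamma\sum_k q_k = \gamma$, using that $\qvec$ lies in the simplex.

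The minimum over $\Dspace$ is therefore $\min\{u_{\tilde C}(\qvec),\gamma\}$, attained at $\tilde d_B(\qvec)$ when $u_{\tilde C}(\qvec)\le\gamma$ and at $d_r$ otherwise. This gives \eqref{eqapp:dB}. At the tie $u_{\tilde C}(\qvec)=\gamma$ both choices are minimizers. Since $d_B$ only needs to be some element of the argmin, choosing $\tilde d_B(\qvec)$ there is legitimate, and I would remark on this explicitly.

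Finally, substitute $d_B(\qvec)$ into $C_\gamma$. In the accept branch, $d_B(\qvec)=\tilde d_B(\qvec)\in\tilde\Dspace$, so $C_\gamma(y,d_B(\qvec))=\tilde C(y,\tilde d_B(\qvec))=\tilde C^*(y,\qvec)$. In the reject branch the value is $\gamma$. This yields \eqref{eqapp:Cgammapsr}, and Proposition~\ref{prop:psr_construction}, applied with $C=C_\gamma$ and this $d_B$, shows that $C^*_\gamma$ is a PSR.

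There is no real obstacle here. The only points needing care are the existence of $\tilde d_B$ (the argmin over a possibly infinite $\tilde\Dspace$), which is assumed in the statement, and the tie-breaking convention just mentioned.
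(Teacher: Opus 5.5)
Your proposal is correct and follows essentially the same route as the paper. Both arguments split the minimization over $\Dspace$ into $\tilde\Dspace$ and $\{d_r\}$, identify $\min_{d\in\tilde\Dspace}\E_{y\sim\qvec}[\tilde C(y,d)]$ with $u_{\tilde C}(\qvec)$, compare it with $\gamma$, and then substitute $d_B(\qvec)$ into $C_\gamma$ and invoke Proposition~\ref{prop:psr_construction}. Your explicit use of $\sum_k q_k=1$ and your tie-breaking remark at $u_{\tilde C}(\qvec)=\gamma$ make precise two points the paper leaves implicit.
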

\begin{proof}
We start by computing the Bayes decision for the cost in \eqref{eq:reject_cost} for some $\qvec\in\Simp_K$ and $\gamma\geq0$.
\begin{align*}
    d_B(\qvec) &=\argmin_{d\in\Dspace}\E[C_\gamma(y,d)] \\
    &= \argmin_{d\in\Dspace}\sum_{k=1}^K q_kC_\gamma(y_k,d) \\
    &= \argmin_{d\in\Dspace}\left(\sum_{k=1}^K q_k C_\gamma(y_k,d) \ind{d \neq d_r} + \sum_{k=1}^K q_k C_\gamma(y_k,d) \ind{d = d_r} \right)\\
    &= \argmin_{d\in\Dspace}\left(\E[\tilde C(y,d)] \ind{d \neq d_r} + \gamma \ind{d = d_r} \right)\\
    & = \begin{dcases}
        \argmin_{d\in\tilde \Dspace}\E[\tilde C(y,d)] & \text{if } \min_{d\in\tilde \Dspace}\E[\tilde C(y,d)] \leq \gamma \\
        d_r & \text{if } \min_{d\in\tilde \Dspace}\E[\tilde C(y,d)] > \gamma 
    \end{dcases} \\
\end{align*}
where all expectations here are taken with respect to $\qvec$. 

We then note that 
\begin{align*}
u_{\tilde C}(\qvec) & \triangleq \sum_{k=1}^K q_k \tilde C^*(y_k,\qvec) \\
& = \sum_{k=1}^K q_k \tilde C(y_k,\tilde d_B(\qvec)) \\
& = \min_{d\in\tilde \Dspace}\E[\tilde C(y,d)]
\end{align*}
and therefore, since $\tilde{d}_B(\qvec) \in \argmin_{d \in \tilde{\Dspace}} \sum_{k=1}^K q_k \tilde{C}(y_k,d) = \argmin_{d\in\tilde \Dspace}\E[\tilde C(y,d)]$, it holds that
\begin{align*}
    d_B(\qvec)& = \begin{dcases}
        \tilde d_B(\qvec) & \text{if  } u_{\tilde C}(\qvec) \leq \gamma \\
        d_r & \text{if  } u_{\tilde C}(\qvec) > \gamma 
    \end{dcases} 
\end{align*}
which proves \eqref{eqapp:dB}. Finally, using Proposition~\ref{prop:psr_construction}, we obtain the PSR $C_\gamma^*(y,\qvec)$ by evaluating $C_\gamma(y,d)$ on the decision rule $d=d_B(\qvec)$. If $u_{\tilde C}(\qvec) \leq \gamma$, then $d_B(\qvec)=\tilde d_B(\qvec)$ and therefore $C_\gamma^*(y,\qvec) = C_\gamma(y,d_B(\qvec))=C_\gamma(y,\tilde d_B(\qvec))=\tilde C(y,\tilde d_B(\qvec))=\tilde C^*(y,\qvec)$. If $u_{\tilde C}(\qvec) > \gamma$, then $d_B(\qvec)=d_r$ and therefore $C_\gamma^*(y,\qvec) = C_\gamma(y,d_B(\qvec))=C_\gamma(y,d_r)=\gamma$. This proves \eqref{eqapp:Cgammapsr}.
\end{proof}

\subsection{$C_w^*$, a weighted integral of $C_\gamma^*$, is a PSR}\label{sec:proof_psr}

The following proposition has been demonstrated for a more general case in \cite{brummer_measuring_2010}. Here we include a self-contained simple proof for our specific scenario.
\begin{proposition}
    Let $C^*_\gamma$, a family of non-negative PSRs parameterized by  $\gamma \in [\gamma_m, \gamma_M] \subset \R$. Let $w:[\gamma_m, \gamma_M] \rightarrow \R_{\geq 0}$ be a non-negative function such that $w(\gamma) C^*_\gamma(y,\qvec)$ is an integrable function of  $\gamma$ for any $y \in \Yspace$ and $\qvec \in \Simp_K$ and $\E_{y\sim \pvec}[C^*(y,\qvec)]$ for $\qvec\in\Simp_K$ is finite for every $\pvec\in\Simp_K$. Then, the function $C^*_w:\Yspace \times \Simp_K \rightarrow \R_{\geq 0}$ defined as
\begin{align} \label{eqapp:Cw}
    C^*_w(y,\qvec) \triangleq \int_{\gamma_m}^{\gamma_M} w(\gamma)C^*_\gamma(y,\qvec)d\gamma
\end{align}
is also a non-negative PSR. 
\end{proposition}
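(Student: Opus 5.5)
The plan is to verify the two claimed properties, non-negativity and properness, directly from the definition in \eqref{eqapp:Cw}. In both cases I will use only the pointwise properties of the integrand and then swap the finite sum defining the expectation with the integral over $\gamma$.

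\textbf{Non-negativity.} For every $\gamma\in[\gamma_m,\gamma_M]$ we have $w(\gamma)\geq 0$ and $C^*_\gamma(y,\qvec)\geq 0$. The integrand is therefore a non-negative integrable function of $\gamma$, and its integral $C^*_w(y,\qvec)$ is non-negative and finite. So $C^*_w$ indeed maps into $\R_{\geq 0}$.

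\textbf{Properness.} Fix $\pvec,\qvec\in\Simp_K$. Since $\Yspace$ is finite, the expectation is a finite sum, and linearity of the integral gives
\begin{align*}
\E_{y\sim\pvec}[C^*_w(y,\qvec)] = \sum_{k=1}^K p_k \int_{\gamma_m}^{\gamma_M} w(\gamma) C^*_\gamma(y_k,\qvec)\,d\gamma = \int_{\gamma_m}^{\gamma_M} w(\gamma)\,\E_{y\sim\pvec}[C^*_\gamma(y,\qvec)]\,d\gamma .
\end{align*}
This exchange needs no Fubini or Tonelli argument. Each of the $K$ integrals is finite by the integrability assumption, so the step is just linearity. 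The same identity holds with $\qvec$ replaced by $\pvec$.

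Because each $C^*_\gamma$ is a PSR, we have $\E_{y\sim\pvec}[C^*_\gamma(y,\pvec)]\leq \E_{y\sim\pvec}[C^*_\gamma(y,\qvec)]$ for every $\gamma$. Multiplying by $w(\gamma)\geq 0$ preserves this inequality pointwise in $\gamma$. Monotonicity of the integral, applied to two integrable functions, then gives $\E_{y\sim\pvec}[C^*_w(y,\pvec)]\leq\E_{y\sim\pvec}[C^*_w(y,\qvec)]$, which is the defining property of a PSR.

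\textbf{Main obstacle.} The only delicate point is ensuring that both sides are finite, so that we never compare $\infty$ with $\infty$ or manipulate ill-defined quantities. The hypotheses are stated for exactly this purpose: integrability of $w(\gamma)C^*_\gamma(y,\qvec)$ for every $y$ and $\qvec$, together with finiteness of the expectations. I would point to these hypotheses explicitly at the exchange step.

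For our concrete family $C^*_\gamma$ from Proposition~\ref{thrm:ps_gamma_fixed}, integrability over $[0,u_M]$ holds because $C^*_\gamma$ is bounded by $\max(\gamma,\tilde C)$. It therefore suffices that $w$ and $\gamma w(\gamma)$ are integrable, which covers the weights $w_n$ used in the paper at least for $n\geq 1$. The case $n=0$ needs $\log u_{\tilde C}$ to be finite, i.e.\ $u_{\tilde C}(\qvec)>0$. I would note this as a side remark rather than as part of the general proof.
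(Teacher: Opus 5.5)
Your proof is correct and follows the paper's argument: apply the PSR inequality pointwise in $\gamma$, multiply by $w(\gamma)\ge 0$, integrate over $\gamma$, and interchange the expectation with the $\gamma$-integral, with non-negativity read off from the integrand. The only difference is that you justify the interchange by linearity of the integral over the finite sum defining $\E_{y\sim\pvec}$, which is more elementary than the paper's appeal to Tonelli's theorem and fully sufficient here; your side remark that the weight $w_0$ needs $u_{\tilde C}(\qvec)>0$ for integrability is also a fair caveat.
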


\begin{proof}
    Since $C^*_\gamma$ is a PSR and the expectation with respect to any distribution in $\Simp_K$ is finite, it must satisfy $\E_{y\sim \pvec}[C^*_\gamma(y,\pvec)]\leq \E_{y\sim \pvec}[C^*_\gamma(y,\qvec)]$ for all $\qvec,\pvec \in \Simp_K$. Then, given that $w(\gamma) \geq 0$ for all $\gamma$ and that $w(\gamma) C^*_\gamma(y,\pvec)$ is an integrable function of $\gamma$, we can multiply the above inequality by $w(\gamma)$ and integrate over $\gamma$ on both sides preserving the inequality:
    \begin{align*}
        \E_{y\sim \pvec}[C^*_\gamma(y,\pvec)] &\leq \E_{y\sim \pvec}[C^*_\gamma(y,\qvec)] \\
        \int_{\gamma_m}^{\gamma_M} w(\gamma)\E_{y\sim \pvec}[C^*_\gamma(y,\pvec)] d\gamma&\leq \int_{\gamma_m}^{\gamma_M} w(\gamma)\E_{y\sim \pvec}[C^*_\gamma(y,\qvec)] d\gamma \\
    \end{align*}
As $C^*_\gamma$ is pointwise non-negative, $w(\gamma)C^*_\gamma(y,\qvec)$ is integrable and the expectation $\E_{y\sim \qvec}[C^*_\gamma(y,\qvec)]$ is finite, we can invoke Tornelli's theorem to take out the expectation on both sides of the integral:
    \begin{align*}
        \E_{y\sim \pvec}\left[\int_{\gamma_m}^{\gamma_M} w(\gamma)C^*_\gamma(y,\pvec)d\gamma\right] &\leq \E_{y\sim \pvec}\left[\int_{\gamma_m}^{\gamma_M} w(\gamma)C^*_\gamma(y,\qvec)d\gamma\right] \\
        \E_{y\sim \pvec}\left[C^*_w(y,\pvec)\right] &\leq \E_{y\sim \pvec}\left[C^*_w(y,\qvec)\right] \; \forall \qvec\in\Simp_K
    \end{align*}
Which verifies that $C^*_w$ is a PSR. In addition, since $w(\gamma)$ and $C^*_{\gamma}(y,\qvec)$ is non-negative, the integral is also non-negative.
\end{proof}

\subsection{$C_n^*$: A special case of $C_w^*$}\label{sec:proof_w_n}

Here, we derive \eqref{eq:psr_integral_n} obtained by setting $w(\gamma) = w_n(\gamma)=\alpha_n\gamma^{n-1}$, $\alpha_n=(n+1)u_M^{-(n+1)}$, $\gamma_m=0$, and $\gamma_M = u_M \triangleq \max_{\qvec \in \Simp_K} u_{\tilde C}(\qvec)$ for $n\geq0$ in \eqref{eqapp:Cw} above:
\begin{align}
C^*_n(y,\qvec)&=\int_0^{u_M}w_n(\gamma)C_{\gamma}^*(y,\qvec) d\gamma \nonumber\\
    &=
    \int_0^{u_M} \alpha_n\gamma^{n-1}C_{\gamma}^*(y,\qvec) d\gamma \nonumber\\ 
    &= 
    \alpha_n \int_0^{u_{\tilde C}(\qvec)}\gamma^{n} d\gamma+ \alpha_n\tilde C^*(y,\qvec)\int_{u_{\tilde C}(\qvec)}^{u_M} \gamma^{n-1}d\gamma  \nonumber\\ 
    &= 
    \begin{dcases}\label{eqapp:Cn}
    \frac{\alpha_n}{n+1} u_{\tilde C}(\qvec)^{n+1} + \frac{\alpha_n}{n} \left(u_M^n - u_{\tilde C}(\qvec)^n \right)\tilde C(y,\tilde d_B(\qvec)) & \text{if }  n>0\\ 
    \alpha_n \ u_{\tilde C}(\qvec) + \alpha_n  \left(\log u_M - \log u_{\tilde C}(\qvec)\right)\tilde C(y,\tilde d_B(\qvec)) & \text{if }  n=0\\ 
    \end{dcases}
\end{align}

Note that the derivation of this cost assumes that the uncertainty will never be larger than $u_M$. Yet, in practice, when evaluating this cost on a black box UA system for which internal decisions may be suboptimal for our cost of interest, this condition may not be satisfied. This can be addressed by setting the cost for any uncertainty above $u_M$ to a fixed value equal to the cost when $u_{\tilde C}(\qvec) = u_M$, which is given by the first term in the expressions above. Our code implements this safeguard along with an output message warning the user that their system produces extremely suboptimal uncertainties for the selected $\tilde C$.

\subsection{$C_{n01}^*$: A special case of $C_n^*$ for the 0-1 cost }\label{sec:proof_w_n01}

We can now derive the expression for the special case in which $\tilde C$ is the 0-1 cost. In this case $\Yspace \equiv \tilde \Dspace$, so that $\tilde \Dspace = \{d_1, \ldots, d_K\}$. This cost assumes that there is a one-to-one mapping between decisions and classes so that if the class is $y_k$, the only correct decision is $d_k$:
\begin{align*} 
    \tilde C(y_k,d_j) &= \ind{k \neq j} \\
    \tilde d_B(\qvec) & = \argmin_{d_j \in \tilde \Dspace} \sum_{k=1}^Kq_k\tilde C(y_k,d_j) =\argmin_{d_j \in \tilde\Dspace}  \sum_{k=1}^Kq_k I(k\neq j) =
     d_e \\
    u_{\tilde C}(\qvec) &=\min_{d_j \in \tilde \Dspace}  \sum_{k=1}^Kq_k\tilde C(y_k,d_j) =\min_{j=1,\ldots,K}  \sum_{k=1}^Kq_k I(k\neq j) = \min_{j=1,\ldots,K} 1 - q_j = 1 - q_e \\
    u_M &= \max_{\qvec \in \Simp_K} u_{\tilde C}(\qvec) = 1 - q_m \\
    \alpha_n & = \frac{n+1}{(1-q_m)^{n+1}}
\end{align*}
where $e \triangleq \arg \max_{j=1,\ldots,K} q_j$ is the index of the Bayes decision and $q_m = 1/K$ is the minimum value for $q_e$, achieved when $q_j=1/K$ for all $j$.

Plugging in these values in \eqref{eqapp:Cn}, we get:
\begin{align} \label{eqapp:Cn01}
    C^*_{n01}(y,\qvec) = 
    \begin{dcases}
    \frac{\alpha_n}{n+1} (1-q_e)^{n+1} + \frac{\alpha_n}{n} \left((1-q_m)^n - (1-q_e)^n\right)I(y\neq e) & \text{if }  n>0 \\
    \alpha_n (1-q_e) +  \alpha_n \left(\log (1-q_m) - \log (1-q_e)\right) I(y\neq e) & \text{if }  n=0
    \end{dcases}
\end{align}

\subsection{$C_{n01g}^*$: A special case of $C_n^*$ for the generalized 0-1 cost}
\label{sec:proof_w_n01g}

As explained in section~\ref{sec:generative}, we consider the additional case of $C_{n01g}^*$, the generalized 0-1 cost in which $\tilde C(y,d)=\ind{y\neq \hat y(d)}$. In this case, there is not a one-to-one mapping between elements of $\Yspace$ and $\Dspace$, as assumed above. Instead, we assume that the function $\hat y: \Dspace \rightarrow \Yspace$ maps every possible decision into a corresponding equivalence class. The cost is then determined based on $\hat y(d)$. 

It is easy to see that, for this $\tilde C$,  $u_{\tilde C}(\qvec)$, $u_M$, and $\alpha_n$ are the same as in Appendix~\ref{sec:proof_w_n01}. What changes is the way the Bayes decision should be determined:
\begin{align*} 
    \tilde d_B(\qvec) = \argmin_{d \in \Dspace} \sum_{k=1}^Kq_k\tilde C(y_k,d) = \argmin_{d \in \Dspace} \sum_{k=1}^Kq_k I(y_k\neq \hat y(d)) \in y_e
\end{align*}
where $y_e$ is the equivalence class which has the largest probability, $q_e$. Hence, the Bayes decision in this case is any decision within the most likely equivalence class. In turn, the confidence $q_e$ is not the probability of the most likely decision, but the probability of the most likely equivalence class. 

In general, for tasks in which generative systems are used, we cannot specify $\Yspace$ explicitly. This is not a problem for computing the cost, $\tilde C(y,d)$, which only requires knowing whether d belongs to the same equivalence class as y. Yet, to compute the \eqref{eqapp:Cn} we need $K$, the number of classes. To work around this problem, for the generative scenario we assume that $K \rightarrow \infty$. In practice, though, the cost converges very quickly as $K$ increases, as can be seen in Figure \ref{fig:cncag_vs_qe}, so that assuming $K \rightarrow \infty$ is equivalent to assuming $K$ is larger than 16.

Taking the expression in \ref{eqapp:Cn01} and replacing $K\rightarrow \infty$, this cost results in
\begin{align*}
    &C^*_{n01g}(y,\qvec) = 
    \begin{cases}
    (1-q_e)^{n+1} + \frac{n+1}{n} \left(1 - (1-q_e)^n\right)I(y\neq y_e) & \text{if }  n>0 \\
    (1-q_e) - \log (1-q_e) I(y\neq y_e) & \text{if }  n=0
    \end{cases}
\end{align*}
since
\begin{align*}
    \lim_{K\rightarrow \infty} q_m & = 0 \\
    \lim_{K\rightarrow \infty} \alpha_n & = n+1 \\
    \end{align*}

\section{Classification experiments: setup and additional results}
\label{app:classification_expts}

For the classification experiments, we use the scores for a variety of classification tasks available in \url{https://github.com/luferrer/expected_cost/tree/main/notebooks/data}, produced for the experiments in \cite{ferrer2025evaluating,ferrer2025no}. The datasets include SST2 \citep{sst2}, a natural language processing sentiment analysis dataset, AGNEWS \citep{AGNews,zhang2015}, a news classification dataset, SITW \citep{sitwdb} and FVCAUS \citep{morrison2015forensic,Morrison_2012}, two speaker verification datasets, IEMOCAP \citep{IEMOCAP}, an emotion recognition dataset, and CIFAR10 and CIFAR100 \citep{Krizhevsky2009LearningML}, two image processing datasets where the task is to classify the object in an image into one of 10 or 100 classes, respectively. 
See \cite{ferrer2025evaluating} for a description of how the scores for each dataset were obtained and the links to each dataset.

In our experiments, as also done in the original works \cite{ferrer2025evaluating}, we show results both for raw scores coming out of the models, and for scores calibrated with an affine logistic regression model. This model maps the logarithm of the raw scores generated by the systems using the following expression:
\begin{eqnarray}
    \hat s = \text{softmax}(\alpha \log(s) + \beta) \label{eq:affcal}
\end{eqnarray}
where $s$ is the vector of raw scores produced by the system, $\hat s$ is the calibrated vector of predictive probabilities, $\alpha$ is scalar and $\beta$ is a vector. This transform corresponds to temperature scaling \cite{guo:17} when $\beta$ is close to zero. This happens when the system scores are miscalibrated only due to overfitting, which is often the case for large deep neural networks \cite{guo:17}.
We train calibrator with 5-fold cross-validation, using the code available in \url{https://github.com/luferrer/expected_cost/}.

Table \ref{tab:classification_results}
 shows the results for all datasets. A subset of these results is discussed in Section \ref{sec:experiments_classification}.

The metrics AUC, ECE, CE, BS, and AURC (explained in Section \ref{sec:experiments}) are computed using torch libraries.\footnote{\url{http://torch-uncertainty.github.io/generated/torch_uncertainty.metrics.classification.AURC.html}}
 
\begin{table}[t]
\centering
\caption{Standard and proposed metrics on classification systems. The prefix ``N-'' indicates normalization. The first line in the table indicates the variable that is evaluated by the metrics in each block. ER evaluates the candidate predictions ($\tilde d$), AUC, ECE, BS$_{q_e}$, and CE$_{q_e}$ evaluate different aspects of the confidence ($q_e$), BS$_\qvec$ and CE$_\qvec$ evaluate the full predictive distribution ($\qvec$), and AURC and \ECUAS{n} jointly evaluate the candidate answers and the confidences.}
\resizebox{\columnwidth}{!}{%
\begin{tabular}{lll|c|cccc|cc|c|ccc}
\toprule
& & & $\tilde d$  & \multicolumn{4}{c|}{$q_e$}  & \multicolumn{2}{c|}{$\qvec$} & $\tilde d$, $q_e$ & \multicolumn{3}{c}{$\tilde d$, $q_e$} \\
 &  &  &  &  &  &  &  &  &  &  & \multicolumn{3}{c}{N-\ECUAS{n}} \\
 &  &  & \textbf{N-ER} & \textbf{ECE} & \textbf{AUC} & \textbf{N-CE$_{q_e}$} & \textbf{N-BS$_{q_e}$} & \textbf{N-CE$_\qvec$} & \textbf{N-BS$_\qvec$} & \textbf{AURC} & \textbf{n=0} & \textbf{n=1} & \textbf{n=128} \\
\midrule
\multirow[t]{4}{*}{SST-2} & \multirow[t]{2}{*}{GPT-2 (4-shot)} & raw & 0.9956 & 0.3285 & \textbf{0.9444} & 1.0733 & 2.2368 & 1.0733 & 1.1184 & 0.1783 & 1.0528 & 1.1184 & 1.0015 \\
 &  & cal & \textbf{0.2288} & 0.0163 & 0.8305 & \textbf{0.7867} & 1.6646 & \textbf{0.4032} & \textbf{0.3368} & \textbf{0.0312} & \textbf{0.4350} & \textbf{0.3368} & \textbf{0.2302} \\

 & \multirow[t]{2}{*}{GPT-2 (0-shot)} & raw & 0.8284 & 0.2069 & 0.8059 & 0.9375 & 1.8975 & 0.9172 & 0.9204 & 0.1821 & 0.9162 & 0.9204 & 0.8348 \\
 &  & cal & 0.3080 & \textbf{0.0118} & 0.8136 & 0.7992 & \textbf{1.6466} & 0.4949 & 0.4285 & 0.0500 & 0.5272 & 0.4285 & 0.3103 \\
\midrule
\multirow[t]{2}{*}{SITW} & \multirow[t]{2}{*}{PLDA} & raw & 0.3245 & 0.0012 & \textbf{0.9927} & 0.4951 & 1.6469 & 0.1895 & 0.2681 & 0.0000 & 0.1786 & 0.2681 & 0.3243 \\
 &  & cal & \textbf{0.3045} & \textbf{0.0002} & 0.9925 & \textbf{0.4361} & \textbf{1.6177} & \textbf{0.1580} & \textbf{0.2472} & \textbf{0.0000} & \textbf{0.1456} & \textbf{0.2472} & \textbf{0.3042} \\
\midrule
\multirow[t]{2}{*}{FVCAUS} & \multirow[t]{2}{*}{PLDA} & raw & 3.9155 & 0.0178 & 0.9055 & 0.6929 & \textbf{1.5846} & 1.9664 & 2.9438 & 0.0095 & 1.7876 & 2.9438 & 3.9162 \\
 &  & cal & \textbf{0.0132} & \textbf{0.0001} & \textbf{0.9994} & \textbf{0.3234} & 1.6018 & \textbf{0.0079} & \textbf{0.0108} & \textbf{0.0000} & \textbf{0.0074} & \textbf{0.0108} & \textbf{0.0133} \\
\midrule
\multirow[t]{6}{*}{CIFAR-100} & \multirow[t]{2}{*}{RepVGG-A2} & raw & \textbf{0.2273} & 0.0558 & \textbf{0.8741} & 0.7509 & 1.3781 & 0.1999 & 0.3269 & \textbf{0.0611} & 0.4704 & 0.3476 & \textbf{0.2290} \\
 &  & cal & 0.2277 & 0.0478 & 0.8722 & 0.7297 & 1.3690 & \textbf{0.1982} & \textbf{0.3262} & 0.0618 & \textbf{0.4598} & \textbf{0.3473} & 0.2295 \\

 & \multirow[t]{2}{*}{ResNet-20} & raw & 0.3148 & 0.1033 & 0.8360 & 0.8203 & 1.5483 & 0.2661 & 0.4497 & 0.1115 & 0.6054 & 0.4811 & 0.3173 \\
 &  & cal & 0.3167 & \textbf{0.0163} & 0.8423 & 0.7128 & 1.3798 & 0.2434 & 0.4299 & 0.1104 & 0.5544 & 0.4650 & 0.3191 \\

 & \multirow[t]{2}{*}{VGG19} & raw & 0.2639 & 0.1968 & 0.8671 & 1.6021 & 2.0075 & 0.3919 & 0.4494 & 0.0824 & 0.9697 & 0.4590 & 0.2660 \\
 &  & cal & 0.2657 & 0.0439 & 0.8570 & \textbf{0.7062} & \textbf{1.3265} & 0.2475 & 0.3746 & 0.0916 & 0.5008 & 0.3941 & 0.2677 \\
\midrule
\multirow[t]{2}{*}{Pneum} & \multirow[t]{2}{*}{ResNet-50} & raw & 0.2778 & 0.0763 & 0.8381 & 1.5503 & 1.8887 & 0.8016 & 0.3760 & 0.0314 & 0.9425 & 0.3760 & 0.2777 \\
 &  & cal & \textbf{0.2350} & \textbf{0.0101} & \textbf{0.8597} & \textbf{0.7457} & \textbf{1.6354} & \textbf{0.3361} & \textbf{0.2804} & \textbf{0.0198} & \textbf{0.3605} & \textbf{0.2804} & \textbf{0.2351} \\
\midrule
\multirow[t]{2}{*}{Adrenal} & \multirow[t]{2}{*}{ResNet-50} & raw & 0.9275 & 0.1094 & \textbf{0.8022} & 0.9685 & 1.7767 & 0.9310 & 0.8419 & \textbf{0.0796} & 0.9586 & 0.8419 & 0.9275 \\
 &  & cal & \textbf{0.9130} & \textbf{0.0213} & 0.7798 & \textbf{0.8340} & \textbf{1.6735} & \textbf{0.7950} & \textbf{0.7840} & 0.0868 & \textbf{0.7947} & \textbf{0.7840} & \textbf{0.9152} \\
\midrule
\multirow[t]{2}{*}{Path} & \multirow[t]{2}{*}{ResNet-50} & raw & 0.1137 & 0.0714 & 0.8633 & 1.7322 & 1.8706 & 0.3285 & 0.1859 & 0.0220 & 0.5975 & 0.1918 & 0.1137 \\
 &  & cal & \textbf{0.0758} & \textbf{0.0135} & \textbf{0.9123} & \textbf{0.6585} & \textbf{1.5578} & \textbf{0.1020} & \textbf{0.1118} & \textbf{0.0081} & \textbf{0.1748} & \textbf{0.1187} & \textbf{0.0758} \\
\midrule
\multirow[t]{2}{*}{IEMOCAP} & \multirow[t]{2}{*}{Wav2Vec 2.0 (PT)} & raw & 0.5036 & 0.0629 & \textbf{0.7004} & 0.9427 & 1.8188 & 0.6347 & 0.6464 & 0.2085 & 0.7964 & 0.6810 & 0.5036 \\
 &  & cal & \textbf{0.4970} & \textbf{0.0211} & 0.6978 & \textbf{0.9132} & \textbf{1.7891} & \textbf{0.6152} & \textbf{0.6349} & \textbf{0.2062} & \textbf{0.7690} & \textbf{0.6687} & \textbf{0.4970} \\
\midrule
\multirow[t]{2}{*}{AGNews} & \multirow[t]{2}{*}{GPT-2} & raw & 0.7796 & 0.1844 & 0.6431 & 1.0539 & 2.1339 & 0.8138 & 0.8894 & 0.4352 & 1.0045 & 0.9803 & 0.7857 \\
 &  & cal & \textbf{0.3786} & \textbf{0.0362} & \textbf{0.7001} & \textbf{0.9325} & \textbf{1.8136} & \textbf{0.5353} & \textbf{0.5414} & \textbf{0.1724} & \textbf{0.7111} & \textbf{0.5802} & \textbf{0.3816} \\
\midrule
\multirow[t]{6}{*}{CIFAR-10} & \multirow[t]{2}{*}{ResNet-20} & raw & 0.0822 & 0.0382 & 0.9216 & 0.7942 & 1.5977 & 0.1223 & 0.1319 & 0.0092 & 0.2368 & 0.1407 & 0.0829 \\
 &  & cal & 0.0839 & \textbf{0.0070} & \textbf{0.9233} & \textbf{0.6123} & \textbf{1.4360} & 0.1011 & 0.1243 & 0.0093 & 0.1900 & 0.1364 & 0.0845 \\

 & \multirow[t]{2}{*}{VGG19} & raw & 0.0677 & 0.0504 & 0.9209 & 1.2340 & 1.8889 & 0.1528 & 0.1237 & 0.0075 & 0.3118 & 0.1268 & 0.0682 \\
 &  & cal & 0.0683 & 0.0128 & 0.9184 & 0.6892 & 1.5651 & 0.1033 & 0.1103 & 0.0081 & 0.1804 & 0.1165 & 0.0689 \\

 & \multirow[t]{2}{*}{RepVGG-A2} & raw & \textbf{0.0526} & 0.0318 & 0.9217 & 0.8760 & 1.6854 & 0.0921 & 0.0889 & \textbf{0.0061} & 0.1859 & 0.0936 & \textbf{0.0530} \\
 &  & cal & 0.0528 & 0.0084 & 0.9146 & 0.6380 & 1.4953 & \textbf{0.0736} & \textbf{0.0824} & 0.0069 & \textbf{0.1390} & \textbf{0.0887} & 0.0532 \\
\bottomrule
\end{tabular}
}
\label{tab:classification_results}
\end{table}

\subsection{Impact of the quality of the candidate answers on the ECUAS metrics}

Table \ref{tab:classification_results} shows the impact on the ECUAS metrics of the improvement in the quality of the confidence due to calibration as well as the impact of the quality of decisions, for the datasets for which more than one model is included.
In Figure \ref{fig:cncag_vs_temp} we further explore the impact of the quality of the candidate predictions. To this end, we apply temperature scaling to the calibrated log probabilities, $\qvec_c$ so that the transformed probabilities are given by $\qvec_{t} = \text{softmax}(1/t \log \qvec_c)$, where $t$ is the temperature. We then sample the candidate prediction from the resulting distribution, instead of, as before, selecting the argmax prediction. A temperature close to zero results in always sampling the argmax option, since the probabilities become one-hot, while larger temperatures make the distribution flatter and allow for selection of lower-probability classes. In all cases, the confidence is computed as the original calibrated $\qvec$ for the selected class, simulating a scenario in which the candidate answer is poorly estimated, but the confidence is obtained with an independent model that produces well-calibrated probabilities of correctness.
The figure shows that the metric behaves as expected, incurring a monotonic degradation  for all classification tasks as the temperature increases. 

\begin{figure}
    \centering
    \includegraphics[width=\linewidth]{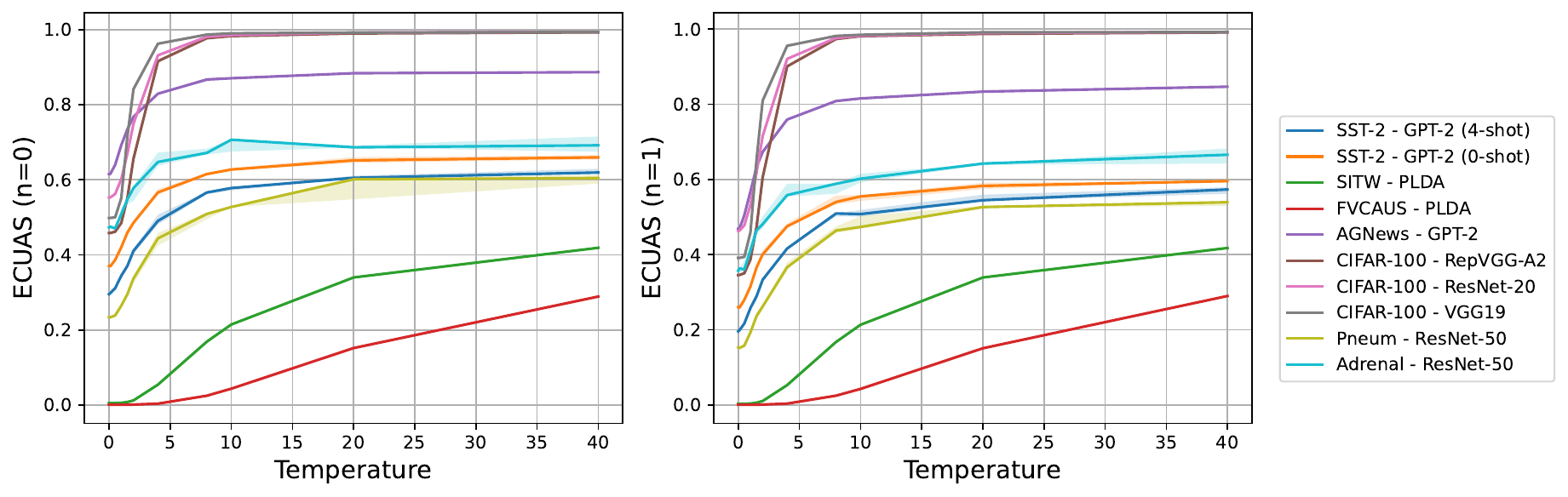}
    \caption{\ECUAS{n} values when temperature scaling is applied to the calibrated version of $\qvec$ and the candidate answer is obtained by sampling from the resulting distribution.}
    \label{fig:cncag_vs_temp}
\end{figure}

\section{Generation results: setup and additional results}\label{app:generative}
In this section, we present details on the setup used for experiments with generative UA systems, as well as additional results on TriviaQA and results on MMLU. 

Our evaluation spans multiple state-of-the-art small LLMs, Qwen 3.5 (4B and 9B) \cite{qwen35blog}, GLM-4.6V-Flash \cite{vteam2025glm45vglm41vthinkingversatilemultimodal}, Ministral-3-8B-Instruct-2512 \cite{Liu2026Ministral3}, as well as larger models from the Gemini 2.5 family (Flash Lite, Flash and Pro) \cite{comanici2025gemini}. We evaluate these models with the methods described in Section \ref{sec:gen-exps} across two datasets: TriviaQA \cite{triviaqa} (Apache 2.0 License) and MMLU (MIT License). Table~\ref{tab:standard_mmlu} and Table~\ref{tab:triviaqa} report the complete performance metrics for all evaluated systems and datasets. A subset of these results is discussed in Section \ref{sec:gen-exps}. We can see that while, in some cases, all metrics agree on which confidence computation method is best for a given LLM, in most cases, the best method depends on the metric. This highlights the importance of choosing the correct metric for the task, one that reflects how useful the system is for decision-making.

\begin{table}[!htbp]
\centering
\caption{Comprehensive performance metrics on the MMLU dataset, comparing candidate predictions generated by various LLMs across three distinct confidence extraction methods.}
\resizebox{\columnwidth}{!}{%
\begin{tabular}{ll|c|cccc|c|ccc}
\toprule
& & $\tilde d$ & \multicolumn{4}{c|}{$q_e$} & $\tilde d$, $q_e$ & \multicolumn{3}{c}{$\tilde d$, $q_e$}  \vspace{1mm}\\
& & & & & & & & \multicolumn{3}{c}{ECUAS$_n$} \\
\textbf{LLM} & \textbf{Method} & \textbf{ER} & \textbf{ECE} & \textbf{AUC} & \textbf{CE$_{q_e}$} & \textbf{BS$_{q_e}$} & \textbf{AURC} & \textbf{n=0} & \textbf{n=1} & \textbf{n=128} \\
\midrule
\multirow{3}{*}{Qwen3.5 4B} & Seq. Post. & \textbf{0.265} & \textbf{0.030} & \textbf{0.824} & \textbf{0.442} & \textbf{0.146} & \textbf{0.093} & \textbf{0.516} & \textbf{0.411} & \textbf{0.267} \\
 & Is True & \textbf{0.265} & 0.092 & 0.494 & 0.623 & 0.207 & 0.272 & 0.654 & 0.472 & \textbf{0.267} \\
 & Verbalized & 0.273 & 0.217 & 0.604 & 2.049 & 0.239 & 0.222 & 2.053 & 0.512 & 0.276 \\
\midrule
\multirow{3}{*}{Qwen3.5 9B} & Seq. Post. & \textbf{0.234} & \textbf{0.046} & \textbf{0.840} & \textbf{0.409} & \textbf{0.131} & \textbf{0.072} & \textbf{0.474} & \textbf{0.365} & \textbf{0.236} \\
 & Is True & \textbf{0.234} & 0.175 & 0.526 & 0.793 & 0.211 & 0.215 & 0.804 & 0.445 & \textbf{0.236} \\
 & Verbalized & \textbf{0.234} & 0.185 & 0.611 & 1.724 & 0.207 & 0.185 & 1.711 & 0.441 & 0.238 \\
\midrule
\multirow{3}{*}{GLM-4.6V-Flash} & Seq. Post. & \textbf{0.264} & \textbf{0.151} & \textbf{0.809} & \textbf{0.591} & \textbf{0.182} & \textbf{0.103} & \textbf{0.640} & \textbf{0.446} & \textbf{0.266} \\
 & Is True & \textbf{0.264} & 0.193 & 0.502 & 0.671 & 0.239 & 0.544 & 0.669 & 0.503 & \textbf{0.266} \\
 & Verbalized & 0.280 & 0.202 & 0.570 & 1.193 & 0.240 & 0.230 & 1.157 & 0.520 & 0.286 \\
\midrule
\multirow{3}{*}{Ministral-3-8B-Instruct-2512} & Seq. Post. & 0.372 & 0.316 & 0.541 & 0.889 & 0.333 & 0.458 & \textbf{0.821} & 0.705 & 0.375 \\
 & Is True & 0.372 & \textbf{0.141} & 0.505 & \textbf{0.771} & \textbf{0.264} & 0.378 & 0.832 & 0.636 & 0.375 \\
 & Verbalized & \textbf{0.327} & 0.234 & \textbf{0.565} & 1.148 & 0.272 & \textbf{0.299} & 1.049 & \textbf{0.599} & \textbf{0.338} \\
\midrule
\multirow{3}{*}{Gemini 2.5 Flash Lite} & Seq. Post. & \textbf{0.181} & \textbf{0.083} & \textbf{0.816} & \textbf{0.422} & \textbf{0.127} & \textbf{0.059} & \textbf{0.455} & \textbf{0.308} & \textbf{0.182} \\
 & Is True & \textbf{0.181} & 0.152 & 0.527 & 1.070 & 0.172 & 0.191 & 1.074 & 0.353 & 0.182 \\
 & Verbalized & 0.187 & 0.104 & 0.624 & 0.599 & 0.156 & 0.123 & 0.616 & 0.343 & 0.188 \\
\midrule
\multirow{3}{*}{Gemini 2.5 Flash} & Seq. Post. & 0.145 & 0.131 & \textbf{0.726} & \textbf{1.003} & 0.135 & 0.085 & \textbf{0.979} & 0.280 & 0.148 \\
 & Is True & 0.145 & 0.135 & 0.547 & 1.346 & 0.143 & 0.138 & 1.347 & 0.288 & 0.146 \\
 & Verbalized & \textbf{0.126} & \textbf{0.094} & 0.646 & 1.054 & \textbf{0.114} & \textbf{0.073} & 1.059 & \textbf{0.240} & \textbf{0.127} \\
\midrule
\multirow{3}{*}{Gemini 2.5 Pro} & Seq. Post. & \textbf{0.087} & 0.179 & 0.622 & 0.545 & 0.128 & 0.108 & \textbf{0.416} & 0.215 & 0.095 \\
 & Is True & \textbf{0.087} & 0.086 & 0.523 & 0.653 & 0.087 & 0.102 & 0.653 & \textbf{0.174} & \textbf{0.088} \\
 & Verbalized & 0.095 & \textbf{0.037} & \textbf{0.669} & \textbf{0.455} & \textbf{0.085} & \textbf{0.058} & 0.461 & 0.180 & 0.096 \\
\bottomrule
\end{tabular}
}
\label{tab:standard_mmlu}
\end{table}
\begin{table}[!htbp]
\centering
\caption{Complete evaluation results on TriviaQA, detailing performance metrics for candidate predictions across multiple LLMs and three different methods for obtaining confidence scores.}
\resizebox{\columnwidth}{!}{%
\begin{tabular}{ll|c|cccc|c|ccc}
\toprule
& & $\tilde d$ & \multicolumn{4}{c|}{$q_e$} & $\tilde d$, $q_e$ & \multicolumn{3}{c}{$\tilde d$, $q_e$}  \vspace{1mm}\\
& & & & & & & & \multicolumn{3}{c}{ECUAS$_n$} \\
\textbf{LLM} & \textbf{Method} & \textbf{ER} & \textbf{ECE} & \textbf{AUC} & \textbf{CE$_{q_e}$} & \textbf{BS$_{q_e}$} & \textbf{AURC} & \textbf{n=0} & \textbf{n=1} & \textbf{n=128} \\
\midrule
\multirow{3}{*}{Qwen3.5 4B} & Seq. Post. & \textbf{0.303} & 0.221 & \textbf{0.922} & \textbf{0.507} & \textbf{0.165} & 0.131 & \textbf{0.585} & \textbf{0.469} & 0.307 \\
 & Is True & \textbf{0.303} & \textbf{0.152} & 0.877 & 0.527 & 0.168 & \textbf{0.086} & 0.612 & 0.471 & \textbf{0.306} \\
 & Verbalized & 0.308 & 0.258 & 0.744 & 2.496 & 0.250 & 0.149 & 2.352 & 0.557 & 0.323 \\
\midrule
\multirow{3}{*}{Qwen3.5 9B} & Seq. Post. & 0.251 & 0.164 & \textbf{0.923} & \textbf{0.403} & \textbf{0.132} & 0.070 & \textbf{0.484} & \textbf{0.382} & 0.252 \\
 & Is True & 0.251 & \textbf{0.109} & 0.889 & 0.422 & 0.139 & \textbf{0.061} & 0.496 & 0.389 & 0.253 \\
 & Verbalized & \textbf{0.246} & 0.203 & 0.756 & 0.800 & 0.212 & 0.101 & 0.823 & 0.459 & \textbf{0.248} \\
\midrule
\multirow{3}{*}{GLM-4.6V-Flash} & Seq. Post. & \textbf{0.209} & \textbf{0.094} & \textbf{0.924} & \textbf{0.338} & \textbf{0.103} & \textbf{0.052} & \textbf{0.389} & \textbf{0.311} & 0.213 \\
 & Is True & \textbf{0.209} & 0.209 & 0.432 & 0.589 & 0.194 & 0.397 & 0.591 & 0.403 & 0.210 \\
 & Verbalized & \textbf{0.209} & 0.131 & 0.790 & 0.576 & 0.154 & 0.070 & 0.611 & 0.363 & \textbf{0.210} \\
\midrule
\multirow{3}{*}{Ministral-3-8B-Instruct-2512} & Seq. Post. & 0.237 & 0.306 & 0.880 & 0.874 & 0.234 & 0.227 & 0.584 & 0.471 & 0.260 \\
 & Is True & 0.237 & \textbf{0.063} & \textbf{0.895} & \textbf{0.373} & \textbf{0.109} & \textbf{0.073} & \textbf{0.430} & \textbf{0.347} & 0.243 \\
 & Verbalized & \textbf{0.231} & 0.159 & 0.658 & 0.579 & 0.191 & 0.673 & 0.594 & 0.422 & \textbf{0.233} \\
\midrule
\multirow{3}{*}{Gemini 2.5 Flash Lite} & Seq. Post. & 0.119 & \textbf{0.043} & \textbf{0.891} & \textbf{0.232} & \textbf{0.063} & 0.025 & \textbf{0.272} & \textbf{0.182} & 0.120 \\
 & Is True & 0.119 & 0.089 & 0.881 & 0.625 & 0.087 & 0.026 & 0.573 & 0.205 & 0.132 \\
 & Verbalized & \textbf{0.108} & 0.068 & 0.853 & 0.445 & 0.080 & \textbf{0.024} & 0.462 & 0.188 & \textbf{0.109} \\
 & Verbalized & \textbf{0.108} & 0.068 & 0.853 & 0.445 & 0.080 & \textbf{0.024} & 0.462 & 0.188 & \textbf{0.109} \\
\midrule
\multirow{3}{*}{Gemini 2.5 Flash} & Seq. Post. & 0.064 & 0.066 & \textbf{0.855} & \textbf{0.209} & 0.057 & \textbf{0.015} & \textbf{0.213} & 0.121 & 0.064 \\
 & Is True & 0.064 & 0.053 & 0.813 & 0.588 & \textbf{0.053} & 0.031 & 0.449 & 0.117 & 0.081 \\
 & Verbalized & \textbf{0.059} & \textbf{0.049} & 0.743 & 0.532 & 0.054 & 0.016 & 0.503 & \textbf{0.113} & \textbf{0.062} \\
\midrule
\multirow{3}{*}{Gemini 2.5 Pro} & Seq. Post. & 0.044 & 0.182 & 0.729 & 0.316 & 0.092 & 0.032 & 0.278 & 0.136 & 0.044 \\
 & Is True & 0.044 & 0.053 & 0.700 & 0.453 & 0.054 & 0.036 & 0.368 & 0.098 & 0.053 \\
 & Verbalized & \textbf{0.037} & \textbf{0.016} & \textbf{0.780} & \textbf{0.191} & \textbf{0.036} & \textbf{0.010} & \textbf{0.185} & \textbf{0.073} & \textbf{0.038} \\
\bottomrule
\end{tabular}
}
\label{tab:triviaqa}
\end{table}

\subsection{TriviaQA Subsample Curation}\label{app:triviaqa_subsample}

The correctness labels for a given prediction in TriviaQA are usually obtained by using a large state-of-the-art LLM to automatically determine whether the predicted answer is semantically equivalent to one of the ground truth answers available in the dataset for that sample \cite{tian-etal-2023-just}. This is done using a Semantic Equivalence Evaluation Prompt (\ref{prompt:semantic-equivalence}). In practice, this process is fragile, often resulting in faulty annotations of correctness.
This observation has been made before in \cite{wang2023evaluating}, where they study the problem by manually annotating answers from five different LLMs for the TriviaQA and the NaturalQuestion datasets. Unfortunately, we were unable to use those annotations for our work since we needed to run the LLMs ourselves to produce various confidence scores. Even if we used the same LLMs as them, we would likely obtain different responses.

To manually annotate the output of our systems, we first selected a random subset of 450 questions which we first manually reviewed to discard problematic cases.  This included questions with ambiguous phrasing due to typos (e.g., "Who first drew Mickey Mouse when ?Disney first supplied the voice?"), time-dependent questions whose answers had changed over time (e.g., ``Who is the father-in-law of Manchester City footballer Sergio Kun Agüerro?", ``Who was the last British male to reach the final of a Grand Slam tennis singles tournament?"), and questions containing nonsensical or inappropriate content in their ground truth answers (including offensive messages, which we have reported on the official HuggingFace webpage for this dataset). After this filter we retained 455 high-quality questions. For these questions, 256 did not need reviewing since they had an exact match to the ground truth for all systems. For the remaining 199 questions, we reviewed the responses from our systems and manually annotated correctness. The curated dataset is publicly available as supplementary material (\url{https://huggingface.co/datasets/ErikErnst/triviaqa-extended-subset}).

The impact of the ground truth curation can be found in Table \ref{tab:llm_judge_comparison}.  The numbers under the E$_\text{LLM}$ (LLM-based semantic equivalence) are obtained with correctness labels produced in two stages. First, exact match to the ground truth answers is attempted. The exact match consists of looking for the normalized string of each of the ground truth answers within the normalized predicted answer. If no exact match is found, then Gemini 3 Flash is used to assess semantic equivalence between the prediction and the ground truth answers. The $E_\text{H}$ numbers are obtained with our manual annotations. These results align with those reported in \cite{wang2023evaluating}. Notably, the ranking of systems greatly differs between the two evaluation procedures for all metrics, in some cases quite markedly -- see, for example, the \ECUAS{0} values for Gemini 2.5 pro. Notably, the impact of the evaluator differs depending on the metric.

\begin{table}[!htbp]
\centering
\caption{Performance according to different metrics when using LLM-based ($E_{\text{LLM}}$), versus\ human ($E_H$) semantic equivalence evaluation on the same set of 455 TriviaQA questions. Best method per LLM in \textbf{bold}. Marked in red are the cases in which evaluators disagree on best method.}
\resizebox{\columnwidth}{!}{%
\begin{tabular}{ll|cc|cc|cc|cc|cc}
\toprule
\textbf{LLM} & \textbf{Method} & \multicolumn{2}{c|}{\textbf{ER}} & \multicolumn{2}{c|}{\textbf{ECE}} & \multicolumn{2}{c|}{AUC} & \multicolumn{2}{c|}{\textbf{AURC}} & \multicolumn{2}{c}{\textbf{ECUAS$_0$}} \\
 & & $E_{\text{LLM}}$ & $E_H$ & $E_{\text{LLM}}$ & $E_H$ & $E_{\text{LLM}}$ & $E_H$ & $E_{\text{LLM}}$ & $E_H$ & $E_{\text{LLM}}$ & $E_H$ \\
\midrule
\multirow{3}{*}{Qwen3.5 4B} & Seq. Post. & 0.336 & \cellcolor{red!15}\textbf{0.303} & 0.188 & 0.221 & \textbf{0.908} & \textbf{0.922} & \cellcolor{red!15}\textbf{0.127} & 0.131 & \textbf{0.610} & \textbf{0.585} \\
 & Is True & 0.336 & \cellcolor{red!15}\textbf{0.303} & \textbf{0.185} & \textbf{0.152} & 0.845 & 0.877 & 0.128 & \cellcolor{red!15}\textbf{0.086} & 0.731 & 0.612 \\
 & Verbalized & \cellcolor{red!15}\textbf{0.334} & 0.308 & 0.284 & 0.258 & 0.722 & 0.744 & 0.170 & 0.149 & 2.742 & 2.352 \\
\midrule
\multirow{3}{*}{Qwen3.5 9B} & Seq. Post. & 0.277 & 0.251 & 0.138 & 0.164 & \textbf{0.911} & \textbf{0.923} & \cellcolor{red!15}\textbf{0.070} & 0.070 & \textbf{0.512} & \textbf{0.484} \\
 & Is True & 0.277 & 0.251 & \textbf{0.136} & \textbf{0.109} & 0.871 & 0.889 & 0.084 & \cellcolor{red!15}\textbf{0.061} & 0.564 & 0.496 \\
 & Verbalized & \textbf{0.273} & \textbf{0.246} & 0.230 & 0.203 & 0.755 & 0.756 & 0.119 & 0.101 & 0.957 & 0.823 \\
\midrule
\multirow{3}{*}{GLM-4.6V-Flash} & Seq. Post. & \textbf{0.246} & \textbf{0.209} & \textbf{0.064} & \textbf{0.094} & \textbf{0.907} & \textbf{0.924} & \textbf{0.056} & \textbf{0.052} & \textbf{0.447} & \textbf{0.389} \\
 & Is True & \textbf{0.246} & \textbf{0.209} & 0.185 & 0.209 & 0.451 & 0.432 & 0.423 & 0.397 & 0.636 & 0.591 \\
 & Verbalized & \textbf{0.246} & \textbf{0.209} & 0.169 & 0.131 & 0.777 & 0.790 & 0.098 & 0.070 & 0.785 & 0.611 \\
\midrule
\multirow{3}{*}{Ministral-3-8B-Instruct-2512} & Seq. Post. & 0.369 & 0.237 & 0.174 & 0.306 & \cellcolor{red!15}\textbf{0.864} & 0.880 & \cellcolor{red!15}\textbf{0.145} & 0.227 & \cellcolor{red!15}\textbf{0.644} & 0.584 \\
 & Is True & 0.369 & 0.237 & 0.188 & \cellcolor{red!15}\textbf{0.063} & 0.807 & \cellcolor{red!15}\textbf{0.895} & 0.160 & \cellcolor{red!15}\textbf{0.073} & 0.800 & \cellcolor{red!15}\textbf{0.430} \\
 & Verbalized & \textbf{0.312} & \textbf{0.231} & \cellcolor{red!15}\textbf{0.135} & 0.159 & 0.651 & 0.658 & 0.611 & 0.673 & 0.744 & 0.594 \\
\midrule
\multirow{3}{*}{Gemini 2.5 Flash Lite} & Seq. Post. & 0.138 & 0.119 & \textbf{0.049} & \textbf{0.043} & \textbf{0.875} & \textbf{0.891} & \cellcolor{red!15}\textbf{0.034} & 0.025 & \textbf{0.325} & \textbf{0.272} \\
 & Is True & 0.138 & 0.119 & 0.109 & 0.089 & 0.820 & 0.881 & 0.050 & 0.026 & 0.791 & 0.573 \\
 & Verbalized & \textbf{0.127} & \textbf{0.108} & 0.088 & 0.068 & 0.791 & 0.853 & 0.037 & \cellcolor{red!15}\textbf{0.024} & 0.719 & 0.462 \\
\midrule
\multirow{3}{*}{Gemini 2.5 Flash} & Seq. Post. & 0.086 & 0.064 & \cellcolor{red!15}\textbf{0.062} & 0.066 & \textbf{0.827} & \textbf{0.855} & 0.026 & \cellcolor{red!15}\textbf{0.015} & \textbf{0.279} & \textbf{0.213} \\
 & Is True & 0.086 & 0.064 & 0.075 & 0.053 & 0.741 & 0.813 & 0.053 & 0.031 & 0.725 & 0.449 \\
 & Verbalized & \textbf{0.075} & \textbf{0.059} & 0.064 & \cellcolor{red!15}\textbf{0.049} & 0.699 & 0.743 & \cellcolor{red!15}\textbf{0.021} & 0.016 & 0.718 & 0.503 \\
\midrule
\multirow{3}{*}{Gemini 2.5 Pro} & Seq. Post. & 0.075 & 0.044 & 0.151 & 0.182 & \cellcolor{red!15}\textbf{0.687} & 0.729 & 0.052 & 0.032 & \cellcolor{red!15}\textbf{0.329} & 0.278 \\
 & Is True & 0.075 & 0.044 & 0.084 & 0.053 & 0.597 & 0.700 & 0.070 & 0.036 & 0.701 & 0.368 \\
 & Verbalized & \textbf{0.073} & \textbf{0.037} & \textbf{0.052} & \textbf{0.016} & 0.686 & \cellcolor{red!15}\textbf{0.780} & \textbf{0.030} & \textbf{0.010} & 0.465 & \cellcolor{red!15}\textbf{0.185} \\
\bottomrule
\end{tabular}
}
\label{tab:llm_judge_comparison}
\end{table}

\subsection{Computational resources}\label{app:resources}
The local experiments detailed in Section \ref{sec:gen-exps} were conducted on a workstation equipped with an AMD Ryzen Threadripper 3960X 24-Core CPU and dual NVIDIA GeForce RTX 3090 GPUs. All inference tasks involving Gemini models were executed via Google Cloud Platform (GCP). To maintain consistent evaluation conditions and manageable computational costs, we disabled the internal reasoning or ``thinking'' modes of the evaluated LLMs whenever supported. Under these configurations, the inference process for each system completed in under 20 minutes. Our experimental codebase is publicly available and designed to seamlessly support both local inference with HuggingFace models and API-based inference through GCP. 

\subsection{Evaluation Prompts}
\label{sec:prompts}

In this section, we provide the exact prompts used for our experiments on TriviaQA and MMLU. For conversational models, these prompts are formatted using the model's specific chat template using the system, user, and assistant roles.

\subsubsection{TriviaQA Prompts}

\textbf{Standard Prompt used to extract the confidences based on Sequence Probability:}
\begin{verbatim}
System: Provide your best guess for the following question. Give ONLY the guess, 
no other words or explanation.
User: Question: {{ question }}
Assistant: Guess:
\end{verbatim}

\textbf{Prompt used to obtained the Verbalized Confidence:}
\begin{verbatim}
System: Provide your best guess and the probability that it is correct (0.0 to 1.0) 
for the following question. Give ONLY the guess and probability, no other words 
or explanation.

Format:
Guess: <short answer>
Probability: <0.0 to 1.0>
User: Question: {{ question }}
Assistant: Guess:
\end{verbatim}

\textbf{Prompt used to extract the confidence as the ``Is True'' probability:}
\begin{verbatim}
System: Evaluate the factual correctness of the proposed answer for the given question.

Response must be only the word 'True' or 'False'. Nothing else, no explanation, 
no other words.
User: Question: {{ question }}
Proposed Answer: {{ answer }}

Is the proposed answer correct?
\end{verbatim}

\textbf{Prompt used to assess Semantic Equivalence for the E$_\text{LLM}$ results:}\label{prompt:semantic-equivalence}
\begin{verbatim}
User: Is the predicted answer to my question semantically equivalent to any 
of the gold standard answers?

Question: {{ question }}
Predicted answer: {{ pred_answer }}
Gold Standard Answers: {{ gold_answers }}

Please answer with a single word, either 'Yes.' or 'No.'
\end{verbatim}

\subsubsection{MMLU Prompts}

\textbf{Standard Prompt used to extract the confidences based on Sequence Probability:}
\begin{verbatim}
System: Answer the following multiple choice question. Respond with ONLY a single 
letter: A, B, C, or D. No other words or explanation.
User: Question: {{ question }}

{{ choices }}
Assistant: Answer:
\end{verbatim}

\textbf{Prompt used to obtained the Verbalized Confidence:}
\begin{verbatim}
System: Answer the following multiple choice question and provide the probability 
that your answer is correct (0.0 to 1.0). Give ONLY the answer letter and probability, 
no other words or explanation.

Format:
Answer: <A, B, C, or D>
Probability: <0.0 to 1.0>
User: Question: {{ question }}

{{ choices }}
Assistant: Answer:
\end{verbatim}

\textbf{Prompt used to extract the confidence as the ``Is True'' probability:}
\begin{verbatim}
System: Evaluate the correctness of the proposed answer for the given multiple 
choice questions.

Response must be only the word 'True' or 'False'. Nothing else, no explanation, 
no other words.
User: Question: {{ question }}

{{ choices }}

Proposed Answer: {{ answer }}

Is the proposed answer correct?
\end{verbatim}

\end{document}